\documentclass{article} 
\usepackage{iclr2023_conference}
\usepackage{times}



\usepackage[utf8]{inputenc} 
\usepackage[T1]{fontenc}    
\usepackage{hyperref}       
\usepackage{url}            
\usepackage{booktabs}       
\usepackage{amsfonts}       
\usepackage{nicefrac}       
\usepackage{microtype}      
\usepackage{xcolor}         
\usepackage{selectp}

\renewcommand{\cite}{\citep}

\usepackage[utf8]{inputenc} 
\usepackage[T1]{fontenc}    
\usepackage{url}            
\usepackage{booktabs}       
\usepackage{amsfonts}       
\usepackage{nicefrac}       
\usepackage{microtype}      

\usepackage{enumitem}

\usepackage{xcolor}
\usepackage{color}
\usepackage{placeins}
\usepackage{amsmath}
\usepackage{amsthm}
\usepackage{amssymb}
\usepackage{bbm}
\usepackage{caption}
\usepackage{tikz}
\usepackage{comment}
\usepackage{bm}
\usepackage{array}
\usepackage{times}
\usepackage{scalefnt}
\usepackage{mathtools}
\usepackage{footnote}
\usepackage{amsmath}
\usepackage{amssymb}
\usepackage{graphicx}
\usepackage{graphics}
\usepackage[normalem]{ulem}

\usepackage{algorithm}
\usepackage{algorithmic}

\usepackage{siunitx} 
\usepackage{wrapfig} 


\newtheorem{proposition}{Proposition}

\newtheorem{theorem}{Theorem}



\newcommand{\E}[1]{\mathbb{E}\left[#1\right]}
\newcommand{\EE}[2]{\mathbb{E}_{#1}\left[#2\right]}







\newcommand{\dtv}{d_{\text{TV}}}

\newcommand{\dhel}{D_{\text{hel}}}

\def\eqref#1{Eq.~(\ref{#1})}

\DeclareMathOperator{\R}{\mathbb{R}} 



\newcommand{\first}[1]{{\color{blue}\textbf{#1}}}
\newcommand{\second}[1]{{\textbf{#1}}}


\newcommand{\lf}[0]{\lambda}



\newcommand{\D}[0]{\mathcal{D}}

\title{Generative Modeling Helps Weak Supervision \\(and Vice Versa)}

%
\newcommand*\samethanks[1][\value{footnote}]{\footnotemark[#1]}

\author{%
   \textbf{Benedikt Boecking}\thanks{boecking@cmu.edu, awd@cmu.edu}
   \enskip
  \textbf{Nicholas Roberts}\thanks{nick11roberts@cs.wisc.edu, fredsala@cs.wisc.edu}
   \enskip \textbf{Willie Neiswanger}\thanks{neiswanger@cs.stanford.edu, ermon@cs.stanford.edu}
   \\
  \textbf{Stefano Ermon}\samethanks[3]
  \enskip
  \textbf{Frederic Sala}\samethanks[2]
  \enskip
   \textbf{Artur Dubrawski}\samethanks[1]
   \\
   \\
   \samethanks[1]~~Carnegie Mellon University
  \enskip
   \samethanks[2]~~University of Wisconsin
   \enskip
   \samethanks[3]~~Stanford University\\
}

\iclrfinalcopy
\begin{document}

\maketitle

\begin{abstract}
Many promising applications of supervised machine learning face hurdles in the acquisition of labeled data in sufficient quantity and quality, creating an expensive bottleneck. To overcome such limitations, techniques that do not depend on ground truth labels have been studied, including weak supervision and generative modeling. While these techniques would seem to be usable in concert, improving one another, how to build an interface between them is not well-understood. In this work, we propose a model fusing programmatic weak supervision and generative adversarial networks and provide theoretical justification motivating this fusion. The proposed approach captures discrete latent variables in the data alongside the weak supervision derived label estimate. Alignment of the two allows for better modeling of sample-dependent accuracies of the weak supervision sources, improving the estimate of unobserved labels. It is the first approach to enable data augmentation through weakly supervised synthetic images and pseudolabels. Additionally, its learned latent variables can be inspected qualitatively. The model outperforms baseline weak supervision label models on a number of multiclass image classification datasets, improves the quality of generated images, and further improves end-model performance through data augmentation with synthetic samples.
\end{abstract}

\vspace{-10pt}
\section{Introduction}
\label{sec:introduction}
\vspace{-5pt}
How can we get the most out of data when we do not have ground truth labels? 
Two prominent paradigms operate in this setting. 
First, programmatic \emph{weak supervision} frameworks use weak sources of training signal 
to train downstream supervised models, without needing access to ground-truth labels~\cite{riedel2010modeling,ratner2016data,dehghani2017neural,lang2021self}.
Second, \emph{generative models} enable learning data distributions which can benefit downstream tasks, e.g. via data augmentation or representation learning, in particular when learning latent factors of variation~\cite{higgins2018towards,locatello2019fairness,hu2019unsupervised}.
Intuitively, these two paradigms should complement each other, as each can be thought of as a different approach to extracting structure from unlabeled data.
However, to date there is no simple way to combine them.

Fusing generative models with weak supervision holds substantial promise. 
For example, it could yield large reductions in data acquisition costs for training complex models.
Programmatic weak supervision replaces the need for manual annotations by applying so-called labeling functions to unlabeled data, producing weak labels that are combined into a pseudolabel for each sample.
This leaves the majority of the acquisition budget to be spent on unlabeled data, and here generative modeling can reduce the number of real-world samples that need to be collected. 
Similarly, information about the data distribution contained in weak label sources may improve generative models, reducing the need to acquire large volumes of samples to increase generative performance and model discrete structure. Additionally, learning with weak labels may enable targeted data augmentation, allowing for class-conditional sample generation despite not having access to ground truth.


\begin{wrapfigure}{r}{0.55\textwidth}
    \vspace{0mm}
    \centering
    \includegraphics[width=0.55\textwidth]{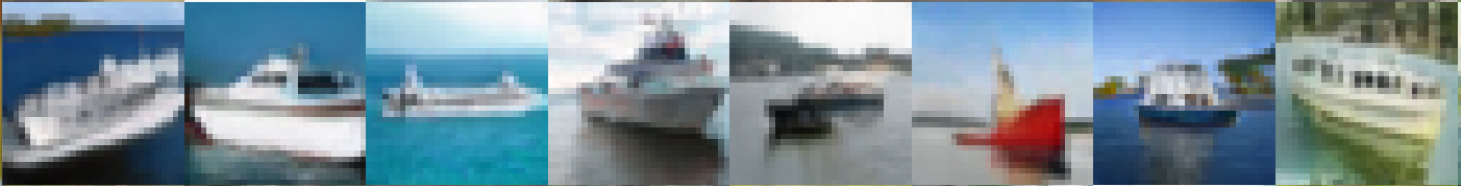}
    \includegraphics[width=0.55\textwidth]{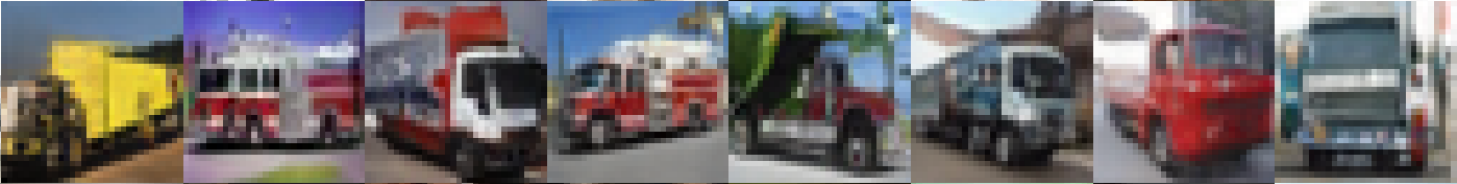}
    \includegraphics[width=0.55\textwidth]{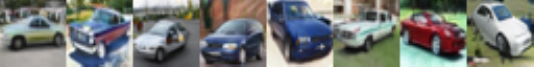}
    \includegraphics[width=0.55\textwidth]{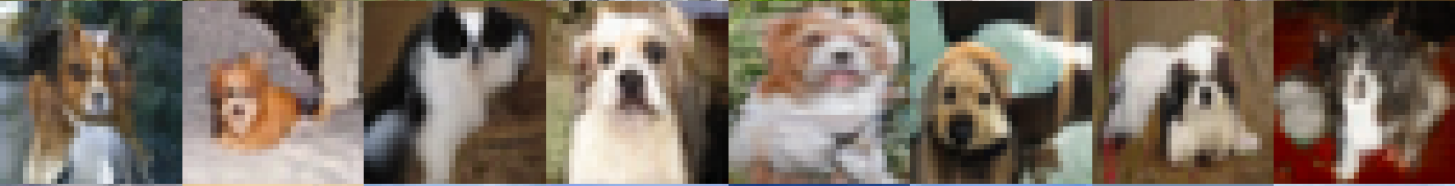}
    \vspace{-4mm}
    \caption{\small 
    Class-conditional image generation by the proposed WSGAN based on a \textbf{\textit{weakly supervised}} CIFAR10 subset with 30k samples. Here, WSGAN uses a StyleGAN2 base architecture and we keep the discrete code in each row fixed.
    }
    \label{fig:styleimagessmall}
    \vspace{-4mm}
\end{wrapfigure}
The main technical challenge is to build an \emph{interface} between the core models used in the two approaches. Generative adversarial networks (GANs) \cite{Goodfellow14}, which we focus on in this work,
have at least a generator and a discriminator, and frequently additional auxiliary models, such as those that learn to disentangle latent factors of variation~\cite{chen2016infogan}. 
In programmatic weak supervision, the \emph{label model} is the main focus.
%
%
It is necessary to develop an interface that aligns the structures learned from the unlabeled data by the various components.

We introduce \emph{weakly-supervised GAN (WSGAN)}, a simple yet powerful fusion of weak supervision and GANs visualized in Fig.~\ref{fig:wsganmain}, and we provide a theoretical justification that motivates the expected gains from this fusion. 
Our WSGAN approach is related to the unsupervised InfoGAN \cite{chen2016infogan} generative model, and also inspired by encoder-based label models as in~\cite{cachay2021end}. These techniques expose structure in the data, and our approach ensures alignment between the resulting variables by learning projections between them. 
%
%
The proposed WSGAN offers a number of benefits, including:
\vspace{-2pt}
\begin{itemize}[parsep=1pt, topsep=0pt, itemsep=1pt, leftmargin=5mm]
\item \textbf{Improved weak supervision:} We obtain better-quality pseudolabels via WSGAN's label model, yielding consistent improvements in pseudolabel accuracy up to 6\% over established programmatic weak supervision techniques such as Snorkel~\cite{ratner2020snorkel}.
\item \textbf{Improved generative modeling:}
Weak supervision provides information about unobserved labels which can be used to obtain better disentangled latent variables, thus improving the model's generative performance. Over 6 datasets, our WSGAN approach improves image generation by an average of 5.8 FID points versus InfoGAN. We conduct architecture ablations and show that the proposed approach can be integrated into state-of-the-art GAN architectures such as \textit{StyleGAN}~\cite{karras2019style} (see Fig.~\ref{fig:styleimagessmall}), achieving state-of-the-art image generation quality. 
%

%
\item \textbf{Data augmentation via synthetic samples:} WSGAN can generate samples and corresponding label estimates for data augmentation (e.g. Fig.~\ref{fig:realfakeGTSRB}), providing improvements of downstream classifier accuracy of up to 3.9\% in our experiments. The trained WSGAN can produce label estimates even for samples, real or fake, that have no weak supervision signal available. 
\end{itemize}


\vspace{-10pt}
\section{Background}
\vspace{-10pt}
We propose to fuse weak supervision with generative modeling to the benefit of both techniques, and first provide a brief overview. A broader review of related work is presented in Section~\ref{sec:relatedwork}.
\vspace{-10pt}
\paragraph{Weak Supervision}
Weak supervision methods that use multiple sources of imperfect and partial labels~\cite{ratner2016data, ratner2020snorkel, cachay2021end}, sometimes referred to as \textit{programmatic weak supervision}, seek to replace manual labeling for the construction of large labeled datasets. Instead, users define multiple weak label sources that can be applied automatically to the unlabeled dataset. Such sources can be heuristics, knowledge base look-ups, off-the-shelf models, and more. The technical challenge is to combine the source votes into a high-quality pseudolabel via a \emph{label model}. This requires estimating the errors and dependencies between sources and using them to compute a posterior label distribution. Prior work has considered various choices for the label model, most of which only take the weak source outputs into account. A review can be found in \citet{zhang2021wrench,zhang2022survey}. 
Instead, our label model produces sample dependent accuracy estimates for the weak sources based on the features of the data, similar to~\citet{cachay2021end}.
\vspace{-10pt}
\paragraph{Generative Models and GANs}
Generative models are used to model and sample from complex distributions. Among the most popular such models are generative adversarial networks (GANs) \cite{Goodfellow14}. GANs consist of a generator and discriminator model that play a minimax game against each other. 
Our approach builds off InfoGAN \cite{chen2016infogan}, which adds an auxiliary inference component to learn disentangled representations via a set of latent factors of variation. We hypothesize that connecting such discrete latent variables to the label model should yield benefits for both weak supervision and generative modeling.

\begin{figure*}[t]
\centering
\includegraphics[width=0.9\linewidth]{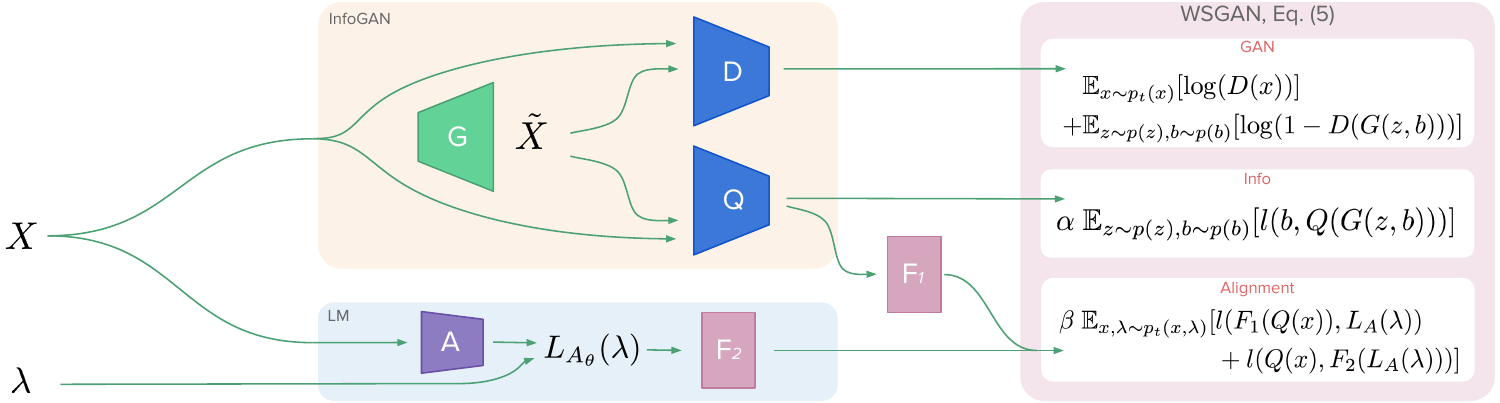}\\
\caption{
\small
The proposed WSGAN models discrete latent variables in $X$ via a network $Q$, while learning a generator $G$ and discriminator $D$. A label model $L$ uses weak supervision votes $\lambda$ and weights estimated by $A$ to produce pseudolabels. WSGAN aligns the pseudolabels with the discrete structure learned by $Q$.
}
\label{fig:wsganmain}
\vspace{-3mm}
\end{figure*}

\vspace{-5pt}
\section{The WSGAN Model}
\label{sec:methods}
\vspace{-8pt}
We first describe our proposed weakly-supervised GAN (WSGAN) model, visualized in Fig.~\ref{fig:wsganmain}, and then provide theoretical justification for the model fusion.
We work with $n$ unlabeled samples $X \in \mathcal{X} \subseteq \mathbb{R}^d$ drawn from a distribution $\mathcal{D}_X$. We want to achieve two goals with the samples $X$. First, in generative modeling, we approximate $\mathcal{D}_X$ with a model that can be used to produce high-fidelity synthetic samples.
Second, in supervised learning, we wish to use $X$ to predict labels $Y \in \{1,2, \ldots, C\}$, where $(X, Y)$ is drawn from a distribution whose marginal distribution is $\mathcal{D}_X$. However, in the weak supervision setting, we do not observe $Y$. Instead, we observe $m$ \textit{labeling functions} (LFs) $\Lambda \in \{0,\ldots, C\}^{n \times m}$ that provide imperfect estimates of $Y$ for a subset of the samples. These LFs vote on a sample $x_i$ to produce an estimate of the label $\lf_j(x_i) \in \{1,\ldots, C\}$ or abstain (i.e. no vote) with $0$. The goal is to combine the $m$ LF estimates into a pseudolabel $\hat{Y}$ that can be used to train a supervised model~\cite{ratner2016data}.
While weak supervision and generative modeling function over a number of modalities, this work focuses on images. Note that LF construction for image tasks is more challenging than for text tasks (cf.\ Section~\ref{sec:relatedwork}). 

 \vspace{-5pt}
\subsection{Proposed Method}\label{sec:proposedmethod}
\vspace{-5pt}
To improve generative performance and the weak supervision-based pseudolabels, we propose a model that consists of a number of components.
Because we wish to ensure that our component models benefit each other, our architecture aims for the following characteristics:
(I) A generative model component that learns discrete latent factors of variation from data
and exposes these externally,
(II) a weak supervision label model component that makes predictions of the unobserved label by aggregating the weak supervision votes, using sample-dependent weights,
(III) a set of \emph{interface} models that connect the components.
Our design choices are made to satisfy those goals. 
\vspace{-8pt}
\paragraph{GAN Architecture}
We write $G$ for the generator; its goal is to learn a mapping to the image space based on input consisting of samples $z$ from a noise distribution $p_Z(z)$ along with a set of latent factors of variation $b \sim p(b)$, following the ideas introduced in InfoGAN~\cite{chen2016infogan}. Because we are targeting a classification setting, we  restrict ourselves to discrete $b$. The output of $G$ are samples $x$; these are consumed by a discriminative model $D$, which estimates  the probability that a sample came from the training distribution rather than $G$.
Furthermore, we define an auxiliary model $Q$ which learns to map from a sample $x$ to the discrete latent code $b$. We denote the standard GAN objective by V(D,G), and the InfoGAN objective by IV(D,G,Q)~ \cite{chen2016infogan}:
\vspace{-2pt}
\begin{align}
    &\min_{G} \max_D\  V(D,G) \ \ \ \ \ 
    \ \ = \EE{x\sim \mathcal{D}_X}{\log(D(x))}
    + \EE{z\sim p(z),b\sim p(b)}{\log(1-D(G(z,b)))}, \label{eq:gan}\\
    \label{eq:infogan}
    &\min_{G, Q} \max_D \  IV(D,G,Q) = V(D,G)
    + \; \alpha \; \EE{z\sim p(z),b\sim p(b)}{l(b,Q(G(z,b)))},
\end{align}
\vspace{-6pt}
\\
where $l$ is an appropriate loss function, such as cross entropy, and $\alpha$ is a trade-off parameter. Equation~\ref{eq:infogan} aims to maximize the mutual information between generated images and $b$, while $G$ continues to fool the discriminator $D$, leading to the discovery of latent factors of variation. 

\vspace{-8pt}
\paragraph{Weak Supervision Label Model} The purpose of the \emph{label model} is to encode relationships between the LFs $\lf$ and the unobserved label $y$, enabling us to produce an informed estimate of $y$. In prior work, the model is often a factor graph \cite{ratner2016data,ratner2019training,fu2020fast,zhang2022survey} with potentials $\phi_j(\lf_j(x), y)$ and $\phi_{j,k}(\lf_j(x), \lf_k(x))$ capturing the degree of agreement between an LF $\lf_j$ and $y$ or correlations between two LFs $\lf_j$ and $\lf_k$. We define the accuracy potentials $\phi_j(\lf_j,y) \triangleq \mathbbm{1}\{\lf_{j}=y\}$ as in related work. Each potential $\phi_j$ is associated with an accuracy parameter $\theta_j$. Once we obtain estimates of $\theta_j$, we can predict $y$ from the LFs $\lambda$ via 
\begin{align}
    L_{\theta}(\lambda)_k = \frac{\exp(\sum_{j=1}^m \theta_j \phi_j(\lf_j(x),k))}{\sum_{\bm \tilde{y} \in \mathcal{Y}}  \exp(\sum_{j=1}^m \theta_j\phi_j(\lf_j(x),\tilde{y}))}\ ,\ \ \forall\ k \in\{1, \ldots, C\}.
    \notag
\end{align}
This is a softmax over the weighted votes of all LFs, which derives from the factor graph introduced in \citet{ratner2016data}. Note that related work only models the LF outputs to learn $\theta$, ignoring any additional information in the features $x$. However, the structure in the input data $x$ is crucial to our fusion. For this reason, we define a modified label model predictor in the spirit of \citet{cachay2021end}. It has \emph{local} accuracy parameters (sample-dependent values encoding the accuracy of each  $\lambda_j$) via an accuracy parameter encoder $A(x): \mathbb{R}^d \rightarrow \mathbb{R}_{+}^m$. This variant is given by:
\begin{align}\label{eq:encoder_labelmodel}
    L_{A_\theta}(\lf)_k = \frac{\exp(\sum_{j=1}^m A(x)_ j \phi_j(\lf_j(x),k))}{\sum_{\bm \tilde{y} \in \mathcal{Y}}  \exp(\sum_{j=1}^m A(x)_ j\phi_j(\lf_j(x),\tilde{y}))} \ ,\ \ \forall\ k \in\{1, \ldots, C\},
\end{align}
a softmax over the LF votes by class, weighted by the accuracy encoder output. Note that, while $A(x)$ allows for finer-grained adjustments of the label estimate $\hat{Y}$, the estimate is still anchored in the votes of LFs which represent strong domain knowledge and are assumed to be better than random.

\vspace{-5pt}
\paragraph{Learning the Label Model} 
The technical challenge of weak supervision is to learn the parameters of the label model (such as $\theta_j$ above) without observing $y$. Existing approaches find parameters under a label model that (i) best explain the LF votes while (ii) satisfying conditionally independent relationships \cite{ratner2016data, ratner2019training, fu2020fast}. The features $x$ are ignored; it is assumed that all information about $y$ is present in the LF outputs. Instead, we promote cooperation between our models by ensuring that \emph{the best label model is the one which agrees with the discrete structure that the GAN can learn, and vice versa}. The intuition is that, as each of the generative and label models learn useful information from data, this information can---if aligned correctly---be shared to help teach the other model.
To this end, note that the sampled variable $b$ can only be observed for generated images, not for real images. Nonetheless, $Q$ can be applied to real-world samples to obtain a prediction of the latent $b$. Crucially, in the weak supervision setting we observe the LF outputs, enabling us derive a label estimate for each real image 
$L_{A_\theta}(\lambda)=\hat{Y},$
which can be aligned with the predicted code to guide $Q$ on real data, and vice versa.

\vspace{-5pt}
\paragraph{Interface Models and Overall WSGAN Objective}
We introduce the following \emph{interface} models to map between the estimates of $b$ and $y$. Let $F_1: [0,1]^C \rightarrow [0,1]^C$ and  $F_2: [0,1]^C \rightarrow [0,1]^C$. An effective choice for $F_1$ and $F_2$ are linear models with a softmax activation function. 
To achieve agreement between the latent structure discovered by the GAN's auxiliary model $Q$ as well as by the label model $L_{A_\theta}$ via the LFs,
we introduce the following overall objective, ensuring that a mapping exists between the latent structures on the real images in the training data:
\begin{align}
    \label{eq:final_objective}
    \min_{G,Q,A,F_1,F_2} \max_D IV(D,G,Q) 
    + \beta\ \mathbb{E}_{x,\lf \sim \mathcal{D}_{X,\Lambda}} [ l(F_1(Q(x)),L_{A}(\lf)) 
     + l(Q(x),F_2(L_{A}(\lf)))], 
\end{align}
with hyperparameter  $\beta$ and loss function $l$, such as the cross entropy. Pseudocode for the added loss term can be found in Algorithm~\ref{alg:main}. In our implementation, as common in related GAN work, we let 
$D,Q$ and $A$ share convolutional layers and define distinct prediction heads for each. 
For $L_A$ we detach the features from the computation graph before passing them to a multilayer perceptron (MLP), followed by a sigmoid activation function. Thus, the WSGAN method only adds a small number of additional parameters compared to a basic GAN or InfoGAN. 

\vspace{-5pt}
\paragraph{Improving Alignment}
Initializing the label model $L_{A}$ such that it produces equal weights for all LFs results in a strong baseline estimate of $\hat{Y}$, as users build LFs to be better than random. 
Initializing $L_{A_\theta}$ in this way, it can act as a teacher in the beginning and guide $Q$ towards the discrete structure encoded in the LFs. 
We find that adding a decaying penalty term that encourages equal label model weights in early epochs--while not necessary to achieve good performance--almost always improves latent label estimates. Let $i\geq 0$ denote the current epoch. We propose to add the following linearly decaying penalty term for an encoder $A$ that uses a sigmoid activation function:
$
    C/(i\times\gamma+1)||A(x)-\Vec{1}\times 0.5||_2^2,
$
where $\gamma$ is a decay parameter. In our experiments we set $\gamma=1.5$.

\vspace{-5pt}
\paragraph{Augmenting the Weak Supervision Pipeline with Synthetic Data}
Given a WSGAN model trained according to Eq.~\ref{eq:final_objective}, we can generate images via $G$ to obtain unlabeled synthetic samples $\tilde{x}$.
To obtain pseudolabels for these images we have at least one and sometimes two options. When LFs can be applied to synthetic images, we can obtain their votes $\lf(\tilde{x})=\tilde{\lf}$ and apply our WSGAN label model $L_A(\tilde{\lf})$. 
However, in many practical applications of weak supervision, some LFs are not applied to images directly, but rather to metadata or an auxiliary modality such as text (cf.\ Section~\ref{sec:relatedwork}). With WSGAN, we can obtain pseudolabels
via $\hat{y} = F_1(Q(\tilde{x}))$ for samples that have no LF votes, using the trained WSGAN components $Q$ and $F_1$, in essence transferring knowledge from $Q$ to the end model.
Note that the quality of these synthetic pseudolabels hinges on the performance of $Q$, which can conceivably improve with the supply of weakly supervised as well as entirely unlabeled data. 

\vspace{-1mm}
\subsection{Theoretical Justification}
\label{sec:theoryjustification}
\vspace{-2mm}
In this section, we provide theoretical results that suggest that there is a provable benefit to combining weak supervision and generative modeling. In particular, we provide two theoretical claims justifying why \textit{weak supervision should help generative modeling (and vice versa)}: (1) generative models help weak supervision via a generalization bound on downstream classification and (2) weak supervision improves a multiplicative approximation bound on the loss for a conditional GAN using the unobserved true labels---namely, we extend the theoretical setup and noisy channel model of the Robust Conditional GAN (RCGAN) \cite{rcgan2018}. Formal statements and proofs of these claims can be found in Appendix~\ref{app:theory}. 

\vspace{-8pt}
\paragraph{Claim (1)} Assume that we have $n_1$ unlabeled real examples where our label model fails to produce labels, i.e. all LFs abstain on these $n_1$ points. This is a typical issue in weak supervision, as sources often only vote on a small proportion of points. 
We then sample enough synthetic examples from our generative model such that we obtain $n_2$ synthetic examples for which our label model \textit{does} produce labels; this enables training of a downstream classifier on synthetic examples alone with the following generalization bound: 
$$\sup_{f \in \mathcal{F}} |\hat{\R}_{\hat{\D}}(f) - \R_{\D}(f)| \leq 2 \mathfrak{R} + \sqrt{ \frac{\log(1/\delta)}{2n_2}} + B_{\ell} G^{\frac{1}{2}} +  B_{\ell} \sqrt{2}\exp(-m\alpha^2),$$
where $\mathfrak{R}$ is the Rademacher complexity of the function class. The first two terms are standard. The third term is the penalty due to generative model usage; any generative model estimation result for total variation distance can be plugged in. For example, for estimating a mixture of Gaussians, 
$G = (4 c_G k d^2/n_1 )^{1/2}$ 
which depends on the number of mixture components $k$ and dimension $d$. The last term is the penalty from weak supervision with $m$ LFs whose accuracy is $\alpha$ better than chance; this implies that generated samples can help weak supervision generalize when true samples cannot. 
\vspace{-8pt}
\paragraph{Claim (2)} Noisy labels from majority vote improve the multiplicative bound on the RCGAN loss given in Theorem 2 of \citet{rcgan2018}. Let $P$ and $Q$ be two distributions over $\mathcal{X} \times \{0, 1\}$ and let $\widetilde{P}_{{\text{MV}}}$ and $\widetilde{Q}_{{\text{MV}}}$ be the corresponding distributions with noisy labels generated by majority vote over $m$ LFs.
Let $d_{\mathcal{F}}(\widetilde{P}_{{\text{MV}}}, \widetilde{Q}_{{\text{MV}}})$ be the RCGAN loss with noisy labels generated by majority vote and let $\epsilon_\lambda$ be the mean error of each of the $m$ LFs. Using majority vote with 
$m \geq 0.5\log(1 / \epsilon_\lambda)/\left( \frac{1}{2} - \epsilon_\lambda \right)^2$ 
LFs, we obtain
an exponentially tighter multiplicative bound on the noiseless RCGAN loss: 
\begin{align*}
d_{\mathcal{F}}(\widetilde{P}_{{\text{MV}}}, \widetilde{Q}_{{\text{MV}}}) \leq d_{\mathcal{F}}(P, Q) &\leq  \left(1 - 2\exp\left(-2m\left(\frac{1}{2} - \epsilon_\lambda\right)^2 \right) \right)^{-1}  d_{\mathcal{F}}(\widetilde{P}_{{\text{MV}}}, \widetilde{Q}_{{\text{MV}}}) \\
&\leq (1 - 2\epsilon_\lambda )^{-1}   d_{\mathcal{F}}(\widetilde{P}_{{\text{MV}}}, \widetilde{Q}_{{\text{MV}}}).
\end{align*}
This means that weak supervision can help an RCGAN more-accurately learn the true joint distribution, even when the true labels are unobserved. The full analysis is provided in Appendix~\ref{app:theory}.

\begin{table}[t]
\centering
\caption{Datasets and labeling function (LF) characteristics used to evaluate the proposed WSGAN. Acc denotes accuracy, and Coverage denotes the proportion of samples where the LF does not abstain.}\label{tab:datasets}
\resizebox{\textwidth}{!}{%
\begin{tabular}{@{} *9l @{}}
\toprule
    Dataset  & \#Classes  & \#LFs & \#Samples  & Mean LF Acc  & Min LF Acc & Max LF Acc& Mean Coverage & LF Type \\ 
    \midrule
    AwA2 -A & 10& 29& 6726& 0.504& 0.053& 0.850& 0.104& Attribute heuristics\\
    AwA2 -B & 10& 32& 6726& 0.548& 0.116& 0.783& 0.131& Attribute heuristics \\
    DomainNet & 10 & 4& 6369& 0.493& 0.416& 0.684& 1.000 & Domain transfer \\
    MNIST  &10& 29& 30000& 0.791& 0.564& 0.931& 0.047 & SSL, finetuning \\
    FashionMNIST & 10 & 23& 30000& 0.773& 0.542& 0.949& 0.047 & SSL, finetuning \\
    GTSRB  & 43 & 100 & 22640& 0.837& 0.609& 0.949& 0.007 & SSL, finetuning \\
    CIFAR10-A & 10& 20& 30000& 0.773& 0.624& 0.896& 0.061 & Synthetic\\
    CIFAR10-B & 10& 20& 30000& 0.736& 0.531& 0.912& 0.042 & SSL, finetuning \\
\bottomrule
\end{tabular}
}
\vspace{-4mm}
\end{table}

\vspace{-10pt}
\section{Experiments}\label{sec:experiments}
\vspace{-10pt}
Our experiments on multiple image datasets show that the proposed WSGAN approach is able to take advantage of the discrete latent structure it discovers in the images, leading to better label model performance compared to prior work. The results also indicate that weak supervision as used by WSGAN can improve image generation performance. 
In the spirit of democratizing AI, we aim to keep the complexity of our experiments manageable, to ensure accessible reproducibility. Therefore, we conduct our main experiments with a simple DCGAN base architecture. As an ablation, we also adapt StyleGAN2-ADA~\cite{karras2020training} to WSGAN, showing that the proposed method can be integrated with other GAN architectures to achieve state-of-the-art image generation and label model performance.
Please see the Appendix for additional details and experiments as well as a link to code.

\vspace{-3mm}
\subsection{Setup}
\vspace{-2mm}
\paragraph{Datasets}
Table~\ref{tab:datasets} shows key characteristics of the datasets used in our experiments, including information about the different LF sets. 
We conduct our main experiments with the Animals with Attributes 2 (AwA2)~\cite{xian2018zero}, DomainNet~\cite{peng2019moment}, the German Traffic Sign Recognition Benchmark (GTSRB)~\cite{Stallkamp2012}, and CIFAR10~\cite{krizhevsky2009learning} color image datasets, as well as with the gray-scale MNIST~\cite{lecun1998gradient} and FashionMNIST~\cite{xiao2017online} datasets.
We use a variety of types of weak supervision sources for these datasets (see  Appendix~\ref{app:dataset_details} for more dataset details). The LF types we cover are:
\vspace{-3pt}
\begin{itemize}[parsep=1pt, topsep=0pt, itemsep=0pt, leftmargin=5mm]
    \item \textit{Domain transfer}: classifiers are trained on images in source domains (e.g. paintings), and the trained classifiers are then applied to images in a target domain (e.g. real images) to obtain weak labels. This LF type is used in our DomainNet experiments, following~\citet{mazzetto2021adversarial}.
    \item \textit{Attribute heuristics}: we use these LFS in our AwA2 experiments. Attribute classifiers are trained on a number of seen classes of animals. Given these weak attribute predictions, we use the known attribute relations and a small amount of validation data to train shallow decision trees to produce weak labels for a set of unseen classes of animals.
    \item \textit{SSL-based}: using image features learned on ImageNET with SimCLR~\cite{chen2020simple}, we fine-tune shallow multilayer perceptron classifiers on small sets of held-out data to produce weak labels for our datasets.
    \item \textit{Synthetic}: these simulated LFs, used in some of our CIFAR10 experiments, are unipolar LFs based on the corrupted true class label. To this end, random errors are introduced to the class label to achieve a sampled target accuracy and propensity. 
\end{itemize}
\vspace{-2mm}
\paragraph{Models} We study two versions of the proposed WSGAN model: (1) \textit{WSGAN-Encoder},  which uses an \textit{accuracy parameter encoder} $A(x)$,
that takes in an image $x$ and outputs an accuracy weight vector for the label model. 
(2) \textit{WSGAN-Vector}, a baseline which learns a \textit{parameter vector} that is used to weigh LF votes and is not sample-dependent. 

\vspace{-2pt}
For our main experiments, $G,D$ follow a simple DCGAN~\cite{radford2015unsupervised} design. All networks are trained from scratch and we use the same hyperparameter settings in all experiments. For our architecture ablation, we adapt StyleGAN2-ADA~\cite{karras2020training} to create  StyleWSGAN. See Appendix~\ref{appendix:implementation_details} for implementation details and parameter settings.

\vspace{-2pt}
We compare WSGAN to the following \emph{label model} approaches:
(I) Snorkel~\cite{ratner2016data,ratner2020snorkel}: a probabilistic graphical model that estimates LF parameters by maximizing the marginal likelihood using observed LFs. 
(II) Dawid-Skene~\cite{dawid1979maximum}: a model motivated by the crowdsourcing setting. The model, fit using expectation maximization, assumes that error statistics of sources are the same across classes and that errors are equiprobable independent of the true class. (III) Snorkel MeTaL~\cite{ratner2019training}: a Markov random field (MRF) model similar to Snorkel which uses a technique to complete the inverse covariance matrix of the MRF during model fitting, and also allows for modeling multi-task weak supervision. 
(IV) FlyingSquid (FS)~\cite{fu2020fast}: based on a label model similar to Snorkel, FS provides a closed form solution by augmenting it to set up a binary Ising model, enabling scalable model fitting.
(V) Majority Vote (MV): A standard scheme that uses the most popular LF output as the estimate of the true label.
\vspace{-3pt}
\paragraph{Evaluation Metrics}
As common in related work, label model performance is compared based on the pseudolabel accuracy the models achieve on the training data, since programmatic weak supervision operates in a transductive setting. Weighted F1 and mean Average Precision are provided in Appendix~\ref{app:additional_metrics}. 
To compare the quality of generated images, we use the
Fr\'{e}chet Inception Distance (FID) on color images, which has been shown to be consistent with human judgments~\cite{heusel2017gans} and is used to measure performance of current state-of-the-art GAN approaches~\cite{karras2021alias}.
To show the improvement in alignment of the auxiliary model $Q$'s predictions of the discrete latent code $b$ with the latent labels $y$, we track the Adjusted Rand Index (ARI) between the two. 

\begin{table}
\centering
\caption{Average posterior accuracy of various label models on training samples with at least one LF vote. We highlight the best result in \first{blue} and the second best result in \second{bold}.}\label{tab:labelmodelresults}
\resizebox{\textwidth}{!}{%
\begin{tabular}{@{} *8l @{}}
\toprule
    Dataset  & MV & DawidSkene &  MeTaL & FS  & Snorkel & WSGAN-Vector & WSGAN-Encoder \\ 
    \midrule
    AwA2 - A  & $0.631$ & $0.607$ & $0.632$ &  0.615 & 0.641  &  \second{0.647}  &  \first{0.681}\\
    AwA2 - B & 0.623 & 0.548 & 0.582  & 0.602  & 0.605  &  \second{0.645}  &  \first{0.699}\\
    DomainNet  & 0.614 &  \second{0.658} & $0.487$ & $0.635$ & $0.499$  &  \first{0.661}  &  0.643\\
    MNIST 
    & 0.775 & 0.729 & 0.766 & 0.773 & 0.766 & \second{0.782}  & \first{0.813} \\
    FashionMNIST
    & \second{0.735} & 0.717 & 0.730 & 0.734 & 0.729 & \second{0.737} & \first{0.744} \\
    GTSRB
    & \second{0.816} & 0.619 & \second{0.815} & 0.671 & \second{0.814} & \second{0.815} & \first{0.823}\\
    CIFAR10-A & $0.827$ &   \second{0.850} & $0.806$   & $0.800$  & $0.807$  & \second{0.850} & \first{0.874} \\
    CIFAR10-B
    & 0.716 & 0.677 & 0.708 & 0.708 & 0.707 & \second{0.725} & \first{0.731} \\
\bottomrule
\end{tabular}
}
\vspace{-12pt}
\end{table}
\vspace{-5pt}
\subsection{Results}
\vspace{-5pt}
We first discuss results comparing label model and image generation performance, before presenting the use of WSGAN for augmentation of the downstream classifier with synthetic samples. We repeat each experiment at least three times and average the results in our tables.
\vspace{-5pt}
\subsubsection{Label Model and Image Generation}
\vspace{-2pt}
\paragraph{Label Model Performance}
Table~\ref{tab:labelmodelresults} shows a comparison of label model performance based on the accuracy of the posterior on the training data, without the use of any labeled data or validation sets.
WSGAN-encoder largely outperforms alternative label models, while the simpler WSGAN-vector model performs competitively as well. These results hold according to additional metrics provided in Appendix~\ref{app:additional_experiments} . Results with standard deviations over 5 random runs are provided in the Appendix in Table~\ref{tab:std_deviations}, indicating that many differences are significant. 

\vspace{-4pt}
\paragraph{Discrete Latent Code Comparison}
Figure~\ref{fig:ari} shows the evolving ARI between the ground truth and the auxiliary model $Q$'s prediction of the latent code on real data during model training. The figures show a large improvement in $Q$'s ability to uncover the unobserved class label structure when comparing WSGAN to InfoGAN, which is expected as WSGAN can take advantage of the weak signals encoded in LFs, while InfoGAN is completely unsupervised. 

\vspace{-4pt}
\paragraph{Image Generation Performance}
\begin{wraptable}{r}{0.54\textwidth}
\vspace{-0.5cm}
\centering
\caption{Color image generation quality (mean FID). The best scores are highlighted in 
\first{blue}.
}\label{tab:FID}
\begin{tabular}{@{} *4l @{}}
\toprule
    Dataset  & InfoGAN & WSGAN-V & WSGAN-E \\ 
    \midrule
    AwA2 - A  & $41.62$ & $36.74$  &  \first{34.71}  \\
    AwA2 - B  & $41.62$ & $ 36.79$  &  \first{34.52} \\
    DomainNet & $53.98$ & $50.16$  &  \first{44.35} \\
    GTSRB     & \first{69.67} &  75.27  & 73.96 \\
    CIFAR10-A & $28.93$  & $25.70$  & \first{22.71} \\
    CIFAR10-B & 33.50 & 26.17  &  \first{24.41} \\
\bottomrule
\end{tabular}
\vspace{-5mm}
\end{wraptable}
Table~\ref{tab:FID} compares FID of generated color images, suggesting that WSGAN models do take advantage of the weak supervision signal to improve $Q$, thereby improving the quality of generated images compared to an InfoGAN using the same base DCGAN architecture. 
WSGAN has a lower FID only on the GTSRB dataset, likely due to GTSRB’s class imbalance and the dataset difficulty (43 classes, <23k samples).  

\begin{figure*}
    \centering
    \includegraphics[width=1.0\textwidth]{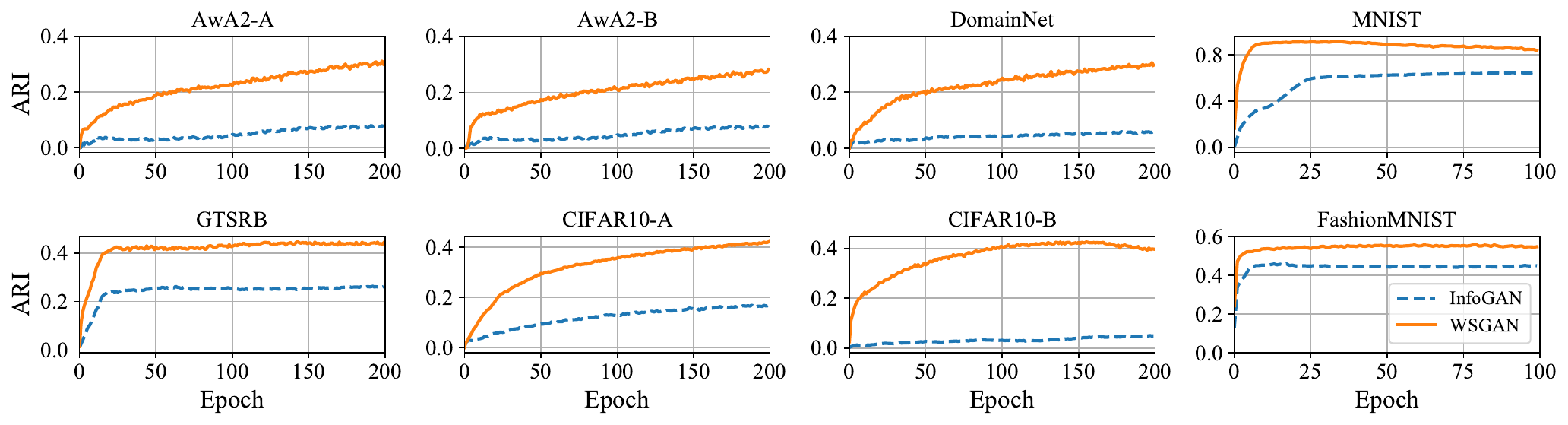}
    \vspace{-5mm}
    \caption{Adjusted Rand Index of the unobserved $y$ and the code predictions $Q(x)$ on real images $x$. Weak supervision allows WSGAN to better uncover latent $y$ compared to an unsupervised InfoGAN.
    }\label{fig:ari}
    \vspace{-11pt}
\end{figure*}

\vspace{-5pt}
\subsubsection{Data Augmentation}
\vspace{-5pt}
We record the change in test accuracy for a ResNet-18~\cite{he2016deep} end model when adding 1,000 synthetic WSGAN-encoder images $\tilde{x}$ 
to augment each dataset.
While the increases are modest, the process is beneficial and does not require additional human labeling or data collection efforts. Adding larger amounts of synthetic samples did not lead to further increases,  possibly due to the limited image quality achieved by the basic DCGAN design explored in this section. 

\begin{wraptable}{r}{0.45\textwidth}
\centering
\vspace{0mm}
\caption{Test accuracy increase when augmenting downstream model training with 1,000 synthetic images and pseudolabels (PLs). Synthetic PLs are obtained via $F_1(Q(\tilde{x}))$, LF PLs via $L_{A(\tilde{x})}(\lambda(\tilde{x}))$.
}\label{tab:augmentation_one}
\begin{tabular}{@{} *3l @{}}
\toprule
    Dataset  & Synthetic PLs & LF PLs \\
    \midrule
    AwA2 - A & $0.88\%$  & 0.79\% \\
    AwA2 - B & $2.40\%$ & 3.90\% \\
    DomainNet & $2.31\%$ & 1.50\% \\
    MNIST & 1.60\% & 1.71\% \\
    FashionMNIST & 0.29\% & 0.34\% \\
    GTSRB & 0.40\% & 0.02\%\\
    CIFAR10-A  & $0.04\%$ & - \\
    CIFAR10-B  & 0.30\% & - \\
\bottomrule
\end{tabular}
\vspace{-5mm}
\end{wraptable}
\vspace{-3mm}
\paragraph{Synthetic Images with Labeling Function Votes}
The last column in Table~\ref{tab:augmentation_one} displays test accuracy increases by applying LFs $\lambda$ to synthetic images $\tilde{x}$. We obtain pseudolabels via $L_{A(\tilde{x})}(\lambda(\tilde{x}))$. We observe a modest average increase of 1.38\%.

\vspace{-3mm}
\paragraph{Synthetic Images with Synthetic Pseudolabels}
We can create pseudolabels with $F_1(Q(\tilde{x}))$, e.g. when LFs cannot be applied to synthetic images. With this, 
the second column of Table~\ref{tab:augmentation_one} shows an average increase in test accuracy of $~1\%$, and up to 2.4\%. We do not observe larger increases in accuracy by adding more generated images. Figure~\ref{fig:realfakeGTSRB} shows a small number of generated images along with synthetic pseudolabel estimates.
While $F_1(Q(x))$ could conceivably be used as a downstream classifier, the choices of network architecture are then constrained as it shares convolutional layers with $D$.

\vspace{-3mm}
\paragraph{Synthetic Data Quality Checks}
In addition to visually inspecting  some generated samples and checking if conditionally generated samples reflect the target labels, we recommend checking the class balance in the pseudolabels of synthetic images before adding them to a downstream training set, as mode collapse in a trained GAN can potentially be diagnosed this way.
\vspace{-5pt}
\subsubsection{Network Ablation --- StyleWSGAN}
\vspace{-5pt}
We apply StyleWSGAN to weakly supervised LSUN scene categories~\cite{yu2015lsun}, and to our CIFAR10-B dataset, please see Appendix~\ref{app:stylewsgan} for details. The results demonstrate WSGAN's complementarity with other GAN architectures and that it scales to images of higher resolution. On weakly supervised LSUN scene category images with a resolution of 256 by 256 pixels, StyleWSGAN achieves a mean FID of 7.54 (samples visualized in Fig.~\ref{fig:stylewsgan_lsun}), while an unconditional, tuned StyleGAN2-ADA~\cite{karras2020training} achieves an FID of 8.41. On CIFAR10-B,
StyleWSGAN achieves a mean FID of 3.79 (see generated images in Fig.~\ref{fig:stylewsgan_cifar}), while also attaining a high label model accuracy of 0.736 (compare with Table~\ref{tab:labelmodelresults}). The unsupervised StyleGAN2-ADA, with the optimal, tuned settings identified in~\citet{karras2020training}, achieves an average FID of 3.85 on this subset. An unsupervised StyleInfoGAN that we created achieved a mean FID of 4.13.

\vspace{-3mm}
\section{Related Work}
\label{sec:relatedwork}

\vspace{-2mm}
\paragraph{Programmatic Weak Supervision}
Data programming (DP)~\cite{ratner2016data} is a popular weak supervision framework in which subject matter experts programmatically label data through multiple noisy sources of labels known as labeling functions (LFs). These LFs capture partial knowledge about an unobserved ground truth variable at better than random accuracy. In DP, a label model combines LF votes to provide an estimate of the unobserved ground truth, which is then used to train an end model using a noise-aware loss function. 
DP has been successfully applied to various domains including medicine~\cite{fries2019weakly,dunnmon2020cross,eyuboglu2021multi} and industry applications~\cite{re2020overton,bach2019snorkel}. 
Many works offer DP label models with improved properties, e.g., extensions to multitask models~\cite{ratner2019training}, better computational efficiency~\cite{fu2020fast}, exploiting small amounts of labels as in semi-supervised learning settings~\cite{chen2021comparing, mazzetto2021adversarial,mazzetto2021semi}, end-to-end training \cite{cachay2021end}, interactive learning \citep{boecking2020interactive}, or extensions to structured prediction settings \cite{shin22universal}. See \citet{zhang2022survey} for a more detailed survey.
\vspace{-2mm}
\paragraph{Programmatic Weak Supervision and Images}
Our main focus is on applications of weak supervision to image data. On images, imperfect labels are often obtained from domain specific primitives and rules~\cite{varma2018snuba,fries2019weakly}, rules defined on top of annotations by surrogate models~\cite{varma2018snuba,chen2019slice,hooper21segmentation}, rules defined on meta-data~\cite{li2010optimol,chen2015webly,izadinia2015deep,denton2015user} or rules applied to a second paired modality such as text~\cite{joulin2016learning,wang2017chestx,irvin2019chexpert,boecking2020interactive,dunnmon2020cross,saab2019doubly,eyuboglu2021multi}. 
%

\vspace{-2mm}
\paragraph{Generative Models and Disentangled Representations}
Among the numerous existing approaches to generative modeling, in this work we focus on generative adversarial networks (GANs)~\cite{Goodfellow14}. We are particularly interested in work that aims to learn disentangled representations~\cite{chen2016infogan,lin2020infogan} that can align with class variables of interest. \citet{chen2016infogan} introduce InfoGAN, which learns interpretable latent codes. This is achieved by maximizing the mutual information between a fixed small subset of the GAN’s input variables and the generated observations.  \citet{Gabbay2020Demystifying} present a unified formulation for class and content disentanglement as well as a new approach for class-supervised content disentanglement. \citet{nie2020semi} study semi-supervised high-resolution disentanglement learning for the state-of-the-art StyleGAN architecture.
A potential downside to modeling latent factors in generative models is a decrease in image quality of generated samples that has been noted when disentanglement terms are added~\cite{burgess2018understanding,khrulkov2021disentangled}.
\\
Prior work has studied how to integrate additional information into GAN training, in particular ground truth class labels~\cite{mirza2014conditional,salimans2016improved,odena2016semi,odena2017conditional,brock2018large,rcgan2018,miyato2018cgans,luvcic2019high}, also considering noisy scenarios~\cite{kaneko2019label}. However, in the programmatic weak supervision setting, having multiple noisy sources of imperfect labels that include abstains present large hurdles to similar conditional modeling.
Some prior work uses other weak formats of supervision to aid specific aspects of generative modeling. For example, \citet{chen2020weakly} propose learning disentangled representation using user-provided ground-truth pairs. Yet, prior work does not fuse programmatic weak supervision frameworks and generative models, and so are limited to one-off techniques to solely improve generative models.

\vspace{-3mm}
\paragraph{Using GANs for Data Augmentation}
An exciting application of GANs is to generate additional samples for supervised model training. The challenge is to produce sufficiently high-quality samples. For example,
\citet{abbas2021tomato} use a conditional GAN to generate synthetic images of tomato plant leaves for a disease detection task.
GANs for data augmentation are also popular in medical imaging~\cite{yi2019generative,motamed2021data,hu2019unsupervised}. For example, \citet{hu2019unsupervised} use an InfoGAN-like model to learn cell-level representations in histopathology, \citet{motamed2021data} augment radiology data, and \citet{pascual2019synthetic} generate synthetic epileptic brain activities.
%




\vspace{-3mm}
\section{Conclusion}
\label{sec:conclusion}
\vspace{-3mm}
We studied the question of how to build an interface between two powerful techniques that operate in the absence of labeled data: generative modeling and programmatic weak supervision. Our fusion of the two, a weakly supervised GAN (WSGAN), defines an interface that aligns structures discovered in its constituent models. This leads to three improvements: first, better quality pseudolabels compared to weak supervision alone, boosting downstream performance. Second, improvement in the quality of the generative model samples. Third, it enables data augmentation via generated samples and pseudolabels, further improving downstream model performance without additional burden on users.

Standard failure cases of GANs such as mode collapse still apply to the proposed approach. However, we do not observe that WSGAN is more susceptible to such failures than the approaches we compare to. 
For future work, we are interested in other modalities, exploiting for instance generative models for graphs and time series. Further, motivated by the performance of WSGAN, we seek to extend the underlying notion of interfaces between models to a variety of other pairs of learning paradigms. Limitations of the proposed approach include common GAN restrictions such as the types of data that can be modeled and the number of unlabeled samples required to fit distributions, and also known difficulties of acquiring weak supervision sources of sufficient quality for image data.  





\subsubsection*{Acknowledgments}
This work was partially supported by a Space Technology Research Institutes grant from NASA’s Space Technology Research Grants Program and the Defense Advanced Research Projects Agency's award FA8750-17-2-0130. This work was also supported in part by NSF (\#1651565, \#CCF2106707) and ARO (W911NF2110125), and the Wisconsin Alumni Research Foundation (WARF).

\bibliography{references}
\bibliographystyle{iclr2023_conference} 

\newpage
\appendix
\section*{Appendix}
\begin{algorithm}[H] 
    \caption{Pseudocode for the proposed WSGAN loss term which is added to the basic InfoGAN loss. Input images and LFs are assumed to be filtered to only contain samples with at least 1 non-abstaining LF vote.}\label{alg:main}
\begin{algorithmic}
    \STATE \textbf{input:} Batch of real images $X$ and one-hot encoded LFs $\Lambda$, label model $L$, networks $Q,A,F_1,F_2$,  WSGAN mode $m$, number of classes $C$, current epoch $i$, $\gamma$ decay parameter.
    \STATE $~~~~$ $\hat{b}$, $Z$ = $Q(X)$ \textcolor{gray}{\# get predicted code and image features $Z$}
    \STATE $~~~~$ \textbf{if} $m==``vector"$:  
    \STATE $~~~~~~~~$ $\theta$ = $A_\theta()$  \textcolor{gray}{\# WSGAN-vector: get weight vector.}
    \STATE $~~~~$ \textbf{else:}
    \STATE $~~~~~~~~$ $\theta$ = $A(Z.detach())$  \textcolor{gray}{\# WSGAN-encoder: predict weights using image features $Z$.}
    \STATE $~~~~$ $\hat{y} = L(\Lambda, \theta)$ \textcolor{gray}{\# Get label estimate from labelmodel using weights $\theta$.} 
    \STATE $~~~~~$\textcolor{gray}{\# Compute cross-entropy losses}
    \STATE $~~~~$  $loss$ = $celoss$($F_1(\hat{b}),\hat{y}$.detach())
    \STATE $~~~~$  $loss$ += $celoss$($F_2(\hat{y}),\hat{b}$.detach())
    \STATE $~~~~$  $loss$ += $C/(i\times\gamma+1)mse(\theta,\Vec{1}\times 0.5)$ \textcolor{gray}{\# add decaying loss keeping weights uniform.}
    \STATE \textbf{return} $loss$ 
    \end{algorithmic}
\end{algorithm}

\section{Implementation Details and Complexity}\label{appendix:implementation_details}
Code for WSGAN can be found at~\url{https://github.com/benbo/WSGAN-paper}. 
\paragraph{(WS)GAN Models} 
The following design choices were used for the experiments conducted with simple DCGAN base networks (as opposed to the settings used in the StyleGAN ablations). 

\textit{Generator G, Discriminator D, and auxilliary model Q}:
Figures \ref{fig:generator} and \ref{fig:discriminator} show the simple DCGAN~\cite{radford2015unsupervised} based generator and discriminator architectures we use in WSGAN for experiments with $32 \times 32$ images. 
As mentioned in the main paper, $Q$ and $D$ are neural networks that generally share all convolutional layers, with a final fully connected layer to output predictions. We follow the same structure in our experiments. We set the dimension of the noise variable $z$ to 100,  and of $b$ equal to the number of classes. We sample $z$ from a normal distribution and $b$ from a uniform discrete distribution. \\
\textit{Accuracy Encoder A}: For WSGAN-Vector, $A$ is simply a parameter vector of the same length as the number of labeling functions.
For WSGAN-Encoder, we use image features obtained from the shared convolutional layers of $Q$ and $D$, which we detach from the computational graph before passing them on to an MLP prediction head. For images with $32\times32$ pixels, the feature vector obtained from the shared convolutional layers is of size $512*16$. The MLP head of $A$ is set to have three hidden layers of size $(256, 128, 64)$, with ReLU activations, and an output layer the size of the number of labeling functions followed by a sigmoid function. We did not observe significant changes in performance when we change the MLP to be shallower or wider. However, for large numbers of LFs, one should consider increasing the width the MLP. 
\\
\textit{Mappings $F1,F2$}: We set $F1$ and $F2$ to each be simple linear models with a softmax at the output, and set the input and output size of each to the number of classes.   

\paragraph{(WS)GAN Training}
We use the same hyperparameter settings for all datasets. We train all GANs for a maximum of $200$ epochs.
We use a batch size of $16$ and find that a lower batch size leads to more frequent convergence of the generator and discriminator. We also conducted ablation experiments with a batch size of $8$ and $32$ and found no significant difference in FID image generation quality or label model accuracy.
For WSGAN, we use four optimizers, one for each of the different loss terms: discriminator training, generator training, the Info loss term, and the WSGAN loss term.  We use Adam for all optimizers and set the learning rates as follows: \num{4e-04} for $D$, \num{1e-04} for $G$, \num{1e-04} for the info loss term, and \num{8e-05} for the WSGAN loss term. We follow the same settings for InfoGAN training for the components shared with WSGAN. 

\paragraph{(WS)GAN Training and Failure Cases}
While WSGAN is still susceptible to the common GAN failure cases of its base networks, such as mode collapse, we empirically find WSGAN training to be more stable than training a GAN that also learns a discrete latent code but uses no weak supervision signals (InfoGAN), despite the high level of noise in our weak supervision sources. InfoGAN failed to converge more frequently.

To help train the DCGAN networks successfully, we find that employing discriminator label flipping (randomly calling a tiny percentage of real samples fake and vice versa) and label smoothing (adding small amounts of noise to the real target of 1.0 and fake target of 0.0) stabilizes and improves GAN training. Despite employing these tricks, we were unable to avoid occasional convergence failures. Fortunately, monitoring the generator and discriminator losses, inspecting the quality of generated images, or tracking image quality metrics such as FID allows one to easily discard failed runs or to pick model checkpoints from earlier iterations before a failure, without requiring labeled data. 


\paragraph{StyleWSGAN Model Setup and Training} We adapt StyleGAN2-ADA~\cite{karras2020training} to build a StyleWSGAN Model as well as a StyleInfoGAN.  The generator architecture follows is the same approach as a class-conditional StyleGAN generator: the sampled code is embedded to a $d$-dimensional vector via a linear layer and then concatenated with the original latent code, after each is normalized. This concatenated vector is then passed to the StyleGAN mapping network. We find the relationship between the number of layers of the StyleGAN mapping network and the size of the embedded sampled code $d$ to be crucial for StyleWSGAN. When the mapping network is too shallow, as in the tuned CIFAR10 settings  in \cite{karras2020training}, a large $d$ can lead to training instability for StyleWSGAN and StyleInfoGAN. 

We use separate optimizer settings for each loss term, and set the learning rate for the Info term (added term of Equation ~\ref{eq:infogan}) and the WSGAN term (added term of Equation ~\ref{eq:final_objective} plus decay penalty) to a factor of $2/10$ of the base learning rate in StyleGAN. This results in a learning rate of 0.0005 for the added WSGAN terms in our experiments, while we maintain a learning rate of 0.0025 for the original StyelGAN terms. Due to the use of different learning rates in the separate optimizers, the added loss terms are not scaled, and the hyper-parameters $\alpha,\beta$ are set to 1. 

For the CIFAR10 experiments we largely follow the settings used in \cite{karras2020training}: no style mixing, no path length regularization, no ResNet D. However, we increase the depth of the mapping network from 2 to 6, we decrease the size of the code embedding to 200, and continue training until the discriminator has seen a total of 50M real images. A mapping network of depth 4, and a code embedding size of 50 also lead to good performance, performing only slightly worse measured by both FID and label model accuracy. 

For the LSUN experiments, we train StyleWSGAN until the discriminator has seen a total of 35M real images, and the baseline StyleGAN2-ADA that we compare to is trained until the discriminator has seen a total of 50M real images. We largely follow the settings used for 256 x 256 images in \cite{karras2020training}, but disable style mixing and path length regularization. We set the size of the discrete code embedding to 50. 


\paragraph{End Model Training}
For all datasets, we train a ResNet-18~\cite{he2016deep} for $100$ epochs, using Adam and a learning rate scheduler. The learning rate scheduler uses a small validation set to make adjustments to the learning rate.

\paragraph{Image Augmentation}
We use the following random image augmentation functions during DCGAN and endmodel training for color images: random crop and resize (cropping out a maximum height/width of 13\%),  random sharpness adjustment ($p=0.2$), random Gaussian blur ($p=0.1$), and random color jitter. 


\paragraph{Label Models} To compare to related work, we use implementations of label models made available via WRENCH~\cite{zhang2021wrench}.

\paragraph{Complexity}
WSGAN shares the same operations as InfoGAN and adds some additional steps on real samples that have at least one LF vote, which slightly increases the required computation. Recall that $C$ denotes the number of classes, $m$ the number of LFs, and $x$ an image of a real sample. Further, let $n_w$ denote the number of samples that have at least one weak label vote from any LF, let $q$ denote the number of steps required for a forward pass through $Q$ to obtain image features and the discrete code prediction, and $a$ denote the number of steps for a forward pass through the MLP $A$. For a forward pass, WSGAN increases the complexity compared to InfoGAN in each epoch by
$\Theta(n_w(a+q+m+2C^2+C(m+8)))$. 
Note that $q$ may be eliminated for the forward pass through careful implementation as the image features are already obtained for the  basic InfoGAN update. 
In practice, in our experiments the computational overhead, including for additional data loading of the LFs, leads to a modest increase in runtime (measured in bps, denoting batches per second) of the weakly supervised WSGAN over the unsupervised InfoGAN, as follows. InfoGAN: ~14bps, WSGAN Encoder: ~7.8bps, WSGAN Vector: ~8.6bps (NVIDIA RTX A6000, batch size 16). In terms of parameters, WSGAN shares the same generator G and discriminator components D,Q as InfoGAN, and adds additional label model parameters. The overall number of parameters in our experiments with 32 x 32 images are: InfoGAN 6.7M, WSGAN 8.8M.

\begin{figure}
    \centering
    \includegraphics[width = .9\textwidth]{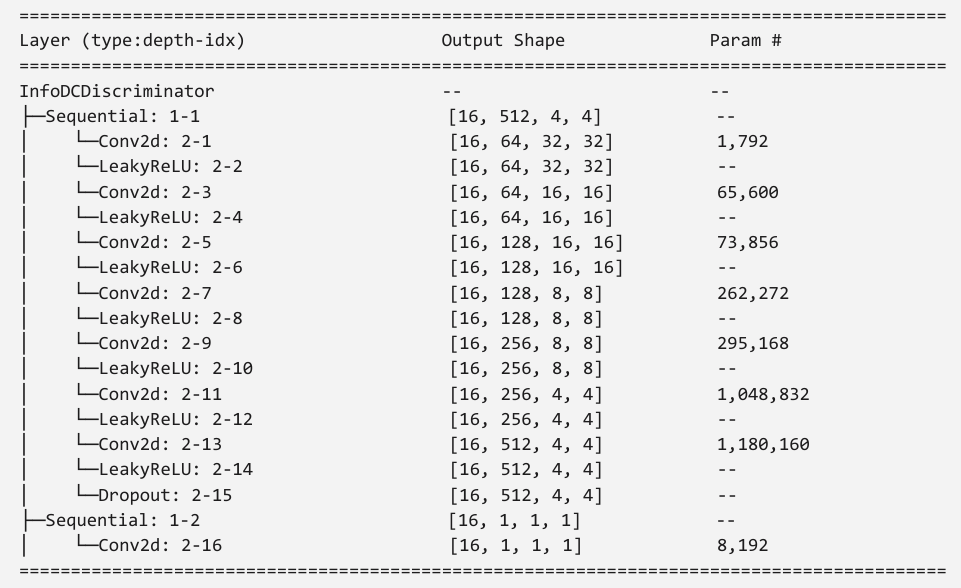}
    \caption{The DCGAN discriminator architecture used in our experiments with 32 x 32 images.}
    \label{fig:discriminator}
\end{figure}

\begin{figure}
    \centering
    \includegraphics[width = .9\textwidth]{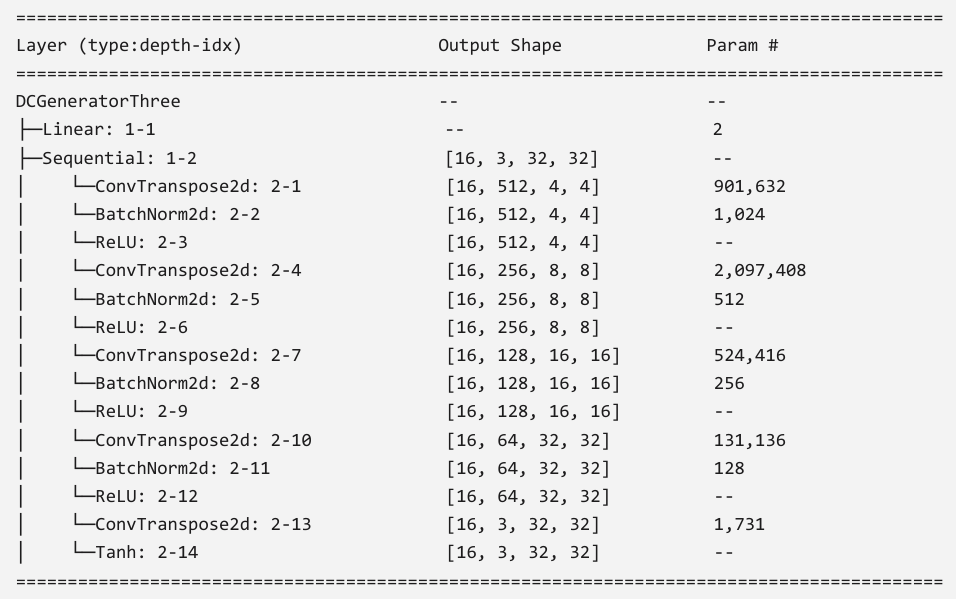}
    \caption{The DCGAN generator architecture used in our experiments with 32x32 images.}
    \label{fig:generator}
\end{figure}


\section{Dataset Details}\label{app:dataset_details}
\begin{figure}
    \centering
    \includegraphics[width=0.49\textwidth]{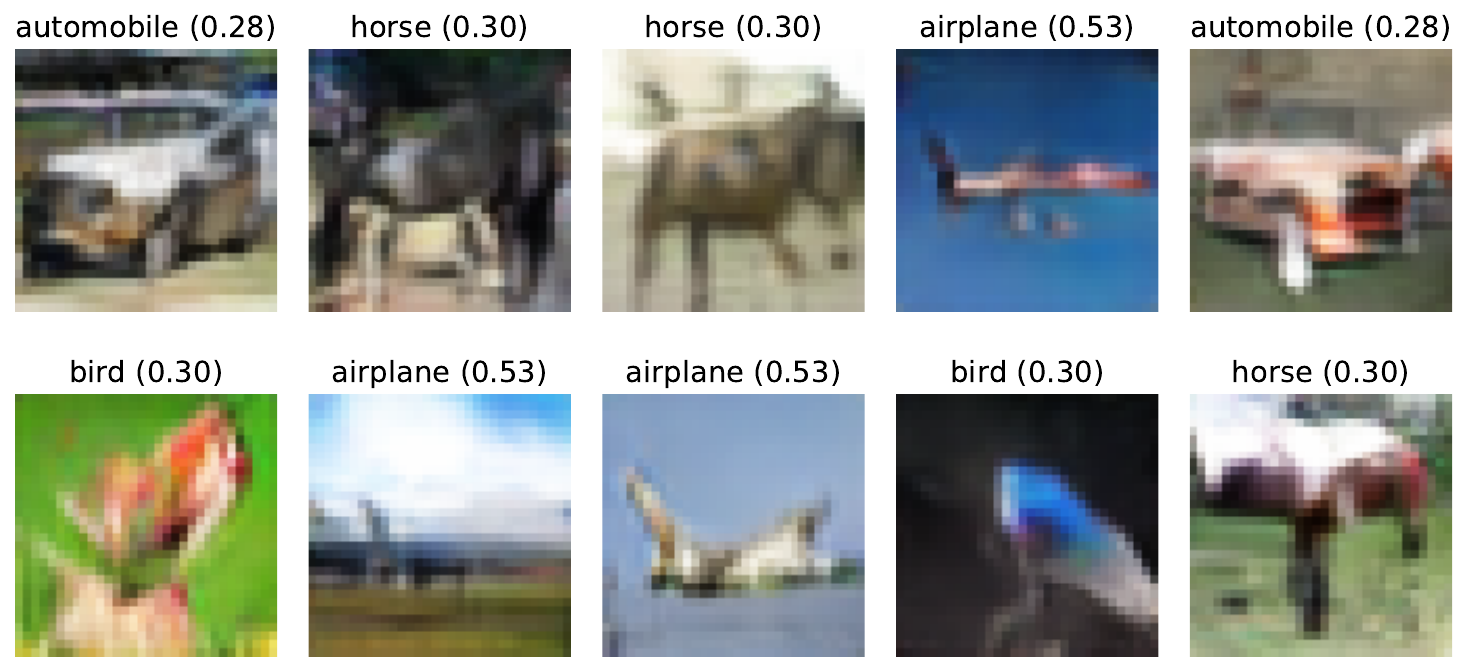}
    \caption{\small Some synthetic images and pseudolabels generated by the proposed WSGAN with a DCGAN base-architecture, learned from weakly supervised CIFAR10.
    We note that WSGAN is able to generate images and estimate their labels, even for images where no weak supervision sources provide information (see end of Section~\ref{sec:proposedmethod} for details).
    }
    \label{fig:realfakeCIFAR}
\end{figure}
\begin{itemize}
    \item \textit{CIFAR10} contains 32x32 color images of 10 different classes. We create two different subsets of CIFAR10.
    One set (used for experiments CIFAR10-C,D) uses the full training set of CIFAR10 (minus 300 samples held out for downstream validation), while the second (used for experiments CIFAR10-A,B,E,F) is a random subset of 30,000 training images. 
    \item \textit{MNIST} and \textit{FashionMNIST} both contain 28x28 grayscale images, which we resize to 32x32. For both, we use a random sample of 30,000 images from the training data for our experiments. SSL-based labeling functions are fine-tuned on small, random subsets of the remaining training data of each dataset.
    \item \textit{GTSRB} contains 64x64 color images of German traffic signs. We use 22,640 random images from the full training dataset during our experiments, while random subsets of the remaining images in the original training data are used to finetune the SSL-based labeling functions. 
    \item The original \textit{DomainNet}~\cite{peng2019moment} dataset contains 345 classes of images in 6 different domains~\footnote{Real, painting, sketch, clipart, infograph, quickdraw.}. As our dataset, following \citet{mazzetto2021adversarial} we use the images in the real domain and select the 10 classes with the largest number of instances in this domain. Because of the small size of the resulting dataset, we resize the images to 32 x 32 in our experiments.
    \item \textit{Animals with Attributes 2} (AwA2)~\cite{xian2018zero} is an image dataset with known general attributes for each class, divided into 40 seen and 10 unseen classes. Because of the small size of the resulting dataset once LFs are created, we resize the images to 32 x 32 in our experiments.
    \item \textit{LSUN scene categories}
    see Section ~\ref{app:stylewsgan} for details.
\end{itemize}
\subsection{Labeling Function Details}\label{app:lf_details}
\begin{itemize}
    \item \textit{Synthetic}: based on the true class label, we create synthetic, unipolar LFs via the following procedure: for each LF, we sample a class label, an error rate, and a propensity (i.e., the percentage of samples where the LF casts a vote, also referred to as coverage). Given the target label, we then sample true positives and false positives at random to achieve the desired LF accuracy and propensity.
    \item \textit{Domain transfer}: these LFs are used in our DomainNet dataset experiments. We follow~\citet{mazzetto2021adversarial}, and derive weak supervision sources for a multiclass classification task of the real images contained in the DomainNet \cite{peng2019moment} dataset. First, we set our target domain to real images and select the 10 classes with the largest number of instances in this domain. As LFs, we then train classifiers using the selected classes within the remaining five domains, and apply these trained classifiers to the unseen images in the target domain of real images to obtain weak labels.
    \item \textit{Attribute heuristics}: we create two sets of LFs for the Animals with Attributes 2 (AwA2)~\cite{xian2018zero} image classification dataset.  Following~\citet{mazzetto2021semi,mazzetto2021adversarial}, we train one-vs-rest attribute classifiers using the 40 seen classes of the AwA2 dataset. These classifiers are applied to the 10 unseen classes to produce weak attribute labels. At this stage, we discard attribute classifiers which perform worse than random.
    We create an 85\%/5\%/10\% train/validation/test split of the 10 unseen classes which we use to define decision trees to produce weak labels on the bases of weak attribute predictions.  We create the 29 unipolar LFs for \textit{AwA2-A} by training 3 one-vs-rest decision trees per each of the 10 classes on 100 random samples from the training set. To create a slightly easier set, we create the 32 unipolar LFs used in \textit{AwA2-B} by training 80 decision trees, retaining one random tree specializing in each class, and then selecting all remaining ones where validation accuracy is higher than $0.65$.
    \item \textit{SSL-based}:
    The base representations are learned on unsupervised ImageNET with SimCLR~\cite{chen2020simple}. The trained network is used to obtain features for our image datasets. We then train shallow MLP networks on a few hundred held-out samples to predict a randomly sampled target label at a randomly sampled target accuracy. The accuracy is validated to be within range of the target accuracy on another small amount of held-out data. Thus, during their creation these unipolar LFs are never trained or evaluated on the WSGAN training data or the downstream test data.
\end{itemize}

\section{Additional Experiments}\label{app:additional_experiments}
\subsection{WSGAN with a DCGAN Base-architecture}
\begin{figure}
    \centering
    \includegraphics[width=1.0\textwidth]{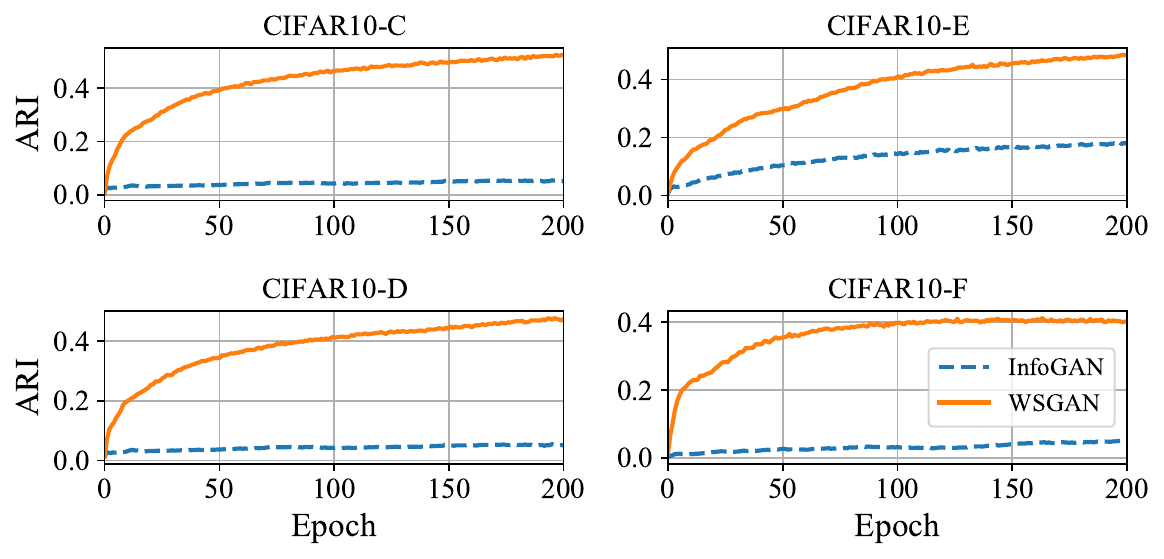}
    \caption{We here show the Adjusted Rand Index (ARI) for the additional CIFAR experiments. The plots show the ARI between the unobserved  class label $y$ and the discrete code prediction by the auxiliary model $Q(x)$ on real image $x$, during training. Weak supervision allows WSGAN to better uncover the latent class structure compared to an unsupervised InfoGAN.}
    \label{fig:additional_ari}
\end{figure}
\begin{table}[ht]
\centering
\caption{Additional datasets and labeling function (LF) characteristics used to evaluate the proposed WSGAN model. Acc denotes accuracy, while Coverage denotes the number of samples where the LF does not abstain.}\label{tab:additional_datasets}
\resizebox{\textwidth}{!}{%
\begin{tabular}{@{} *9l @{}}
\toprule
    Dataset  & \#Classes  & \#LFs & \#Samples  & Mean LF Acc  & Min LF Acc & Max LF Acc& Mean Coverage & LF Type \\ 
    \midrule
    CIFAR10-C & 10& 20& 49,700& 0.747& 0.621& 0.879& 0.048 & Synthetic \\
    CIFAR10-D & 10& 40& 49,700& 0.760& 0.621& 0.898& 0.052 & Synthetic\\
    CIFAR10-E & 10& 40& 30,000& 0.761& 0.624& 0.896& 0.056 & Synthetic \\
    CIFAR10-F & 10& 40& 30,000& 0.728& 0.531& 0.912& 0.046& SSL, finetuning \\
\bottomrule
\end{tabular}
}
\end{table}
For further evaluation of WSGAN with a DCGAN base architecture, we created additional weakly supervised image datasets based on CIFAR10 by varying the number of samples and the type of labeling function, see CIFAR10 dataset details in Table \ref{tab:additional_datasets}. The proposed WSGAN approach outperforms related approaches in these experiments as well. The label model accuracy results are shown in Table~\ref{tab:additional_labelmodelresults}, while additional metrics including F1 are shown in Section~\ref{app:additional_metrics}. Image generation quality results are provided in Table~\ref{tab:additional_FID}. Finally, a comparison between the latent discrete variable of WSGAN and InfoGAN is given in Figure~\ref{fig:additional_ari}, which shows how the Adjusted Rand Index evolves between the unobserved class labels and the latent discrete variable modeled by auxiliary model $Q$. 
\begin{table}[t]
\centering
\caption{Additional datasets to evaluate WSGAN with a DCGAN base architecture. This table shows average posterior accuracy of various label models on training samples with at least one LF vote. We highlight the best result in \first{blue} and the second best result in \second{bold}.}\label{tab:additional_labelmodelresults}
\begin{tabular}{@{} *8l @{}}
\toprule
    Dataset  & MV & DawidSkene &  MeTaL & FS  & Snorkel & WSGAN-Vector & WSGAN-Encoder \\ 
    \midrule
    CIFAR10-C & $0.762$ & \second{0.778} & 0.751  & $0.764$ & $0.757$  &  \second{0.778}  & \first{0.796}\\
    CIFAR10-D & $0.831$ &  \second{0.861} & $0.819$ & $0.805$  & $0.812$  &  $0.854$  &  \first{0.865}\\
    CIFAR10-E & $0.865$ & \second{0.902} & $0.845$ &$0.827$   & $0.849$  & $0.898$   &  \first{0.917} \\
    CIFAR10-F
    & 0.687 & 0.601 & 0.682 & 0.677 & 0.678  & \second{0.691} & \first{0.702} \\
\bottomrule
\end{tabular}
\end{table}
\begin{table}
\centering
\caption{Additional datasets: color image generation quality measured by average Fr\'{e}chet Inception Distance (FID). The best scores for each dataset are highlighted in 
\first{blue}.
}\label{tab:additional_FID}
\begin{tabular}{@{} *4l @{}}
\toprule
    Dataset  & InfoGAN & WSGAN-V & WSGAN-E \\ 
    \midrule
    CIFAR10-C & $33.64$ & \first{24.11}  &  $26.00$   \\
    CIFAR10-D & $33.64$ & $24.09$  &  \first{23.78} \\
    CIFAR10-E & $28.93$  & \first{21.97}  & $22.63$ \\
    CIFAR10-F & 33.50 & 24.59 & \first{22.54} \\
\bottomrule
\end{tabular}
\vspace{-3mm}
\end{table}

\subsection{StyleWSGAN}\label{app:stylewsgan}
Please see Section \ref{appendix:implementation_details} for implementation details and hyperparameter settings in our StyleGAN experiments. Dataset statistics for this section are shown in Table~\ref{tab:additional_datasets_style}.

\paragraph{LSUN scene categories}
\begin{figure}[t]
    \centering
    \includegraphics[width=0.6\textwidth]{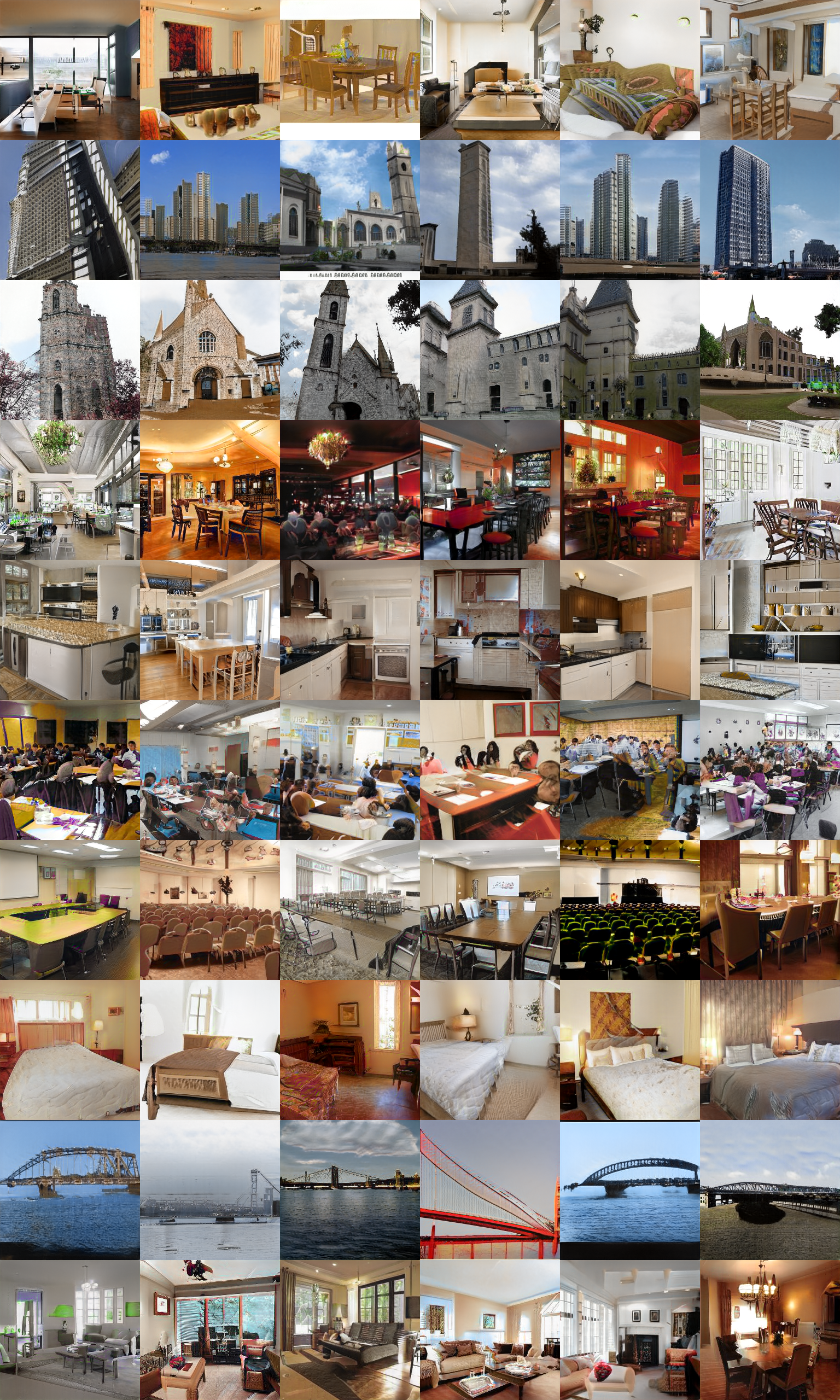}
    \caption{
    Synthetic images learned by StyleWSGAN on a weakly supervised subset of the LSUN scene category dataset.}
    \label{fig:stylewsgan_lsun}
\end{figure}
To test the proposed WSGAN  on higher resolution images with a StyleGAN base architecture, we create a balanced subset of the LSUN scene categories dataset~\cite{yu2015lsun}. The dataset contains 10 classes (i.e. 10 different scene categories) and we center-crop and resize images to 256 by 256 pixels. We sample an equal number of images from each of the 10 classes for a final dataset size of 1,212,270 images. As weak supervision sources, we create 30 SSL-based LFs by training classifiers on small amounts of held-out data using image features learned via self-supervised learning, as described in Section~\ref{app:lf_details}.

StyleWSGAN achieves an average FID of 7.54 on this dataset. An unconditional StyleGAN2-ADA achieves an FID of 10.3 with the settings for 256 by 256 images set in \cite{karras2020training}, and an FID of 8.41 when we turn off path length regularization and style mixing. 
Note that unconditional StyleGAN results on LSUN images with lower FID scores reported in related work are generally obtained by training on a single LSUN scene or object category, rather than on multiple categories simultaneously, as in our experiments, which results in a more challenging setup.

The average WSGAN labelmodel accuracy on this weakly supervised LSUN dataset is 0.766. Other label models obtain the following average accuracies: DawidSkene 0.765, Majority Vote 0.76, FlyingSquid 0.739, Snorkel MeTaL 0.740, and Snorkel 0.728. 

\paragraph{CIFAR10}
First, Figure~\ref{fig:stylewsgan_cifar} shows synthetic images by StyleWSGAN on the weakly supervised CIFAR10-B dataset, which uses SSL-based LFs. These LFs are quite noisy, with a mean LF accuracy of 0.736, which is reflected in the noisy class-conditional samples that can be inspected in Figure~\ref{fig:stylewsgan_cifar}. On this dataset,
StyleWSGAN achieves a mean FID of 3.79, while also attaining a high label model accuracy of 0.736. The unsupervised StyleGAN2-ADA, with the optimal, tuned settings identified in~\cite{karras2020training}, achieves an average FID of 3.85 on this subset.

We create an additional weakly supervised CIFAR10 with lower noise LFs, to see if such a setting can lead to results that are better than the state-of-the-art (SOTA) unsupervised image generation quality on the full CIFAR10 dataset reported in ~\cite{karras2020training}. For this experiment, we create LFs by randomly introducing errors and abstains to the ground-truth vector. For the LFs, we set a minimum accuracy of 0.8 and a maximum accuracy of 0.95 and create 20 LFs. This dataset contains 48000 samples, has a mean LF accuracy of 0.888, and a mean coverage of 0.102 (meaning that an LF on average abstains on $\sim89.8\%$ of the dataset). For this dataset, StyleWSGAN achieves an FID of 2.84, which is better than the SOTA unsupervised result reported in~\cite{karras2019style} of 2.92 FID on the full 50k CIFAR10 samples, but shy of the performance of the conditional StyleGAN~\cite{karras2019style} which uses projection discrimination and has access to all ground-truth labels and achieves and FID of 2.42.

\begin{figure}[t]
    \centering
    \includegraphics[trim={0 20.4cm 22.56cm 0},clip]{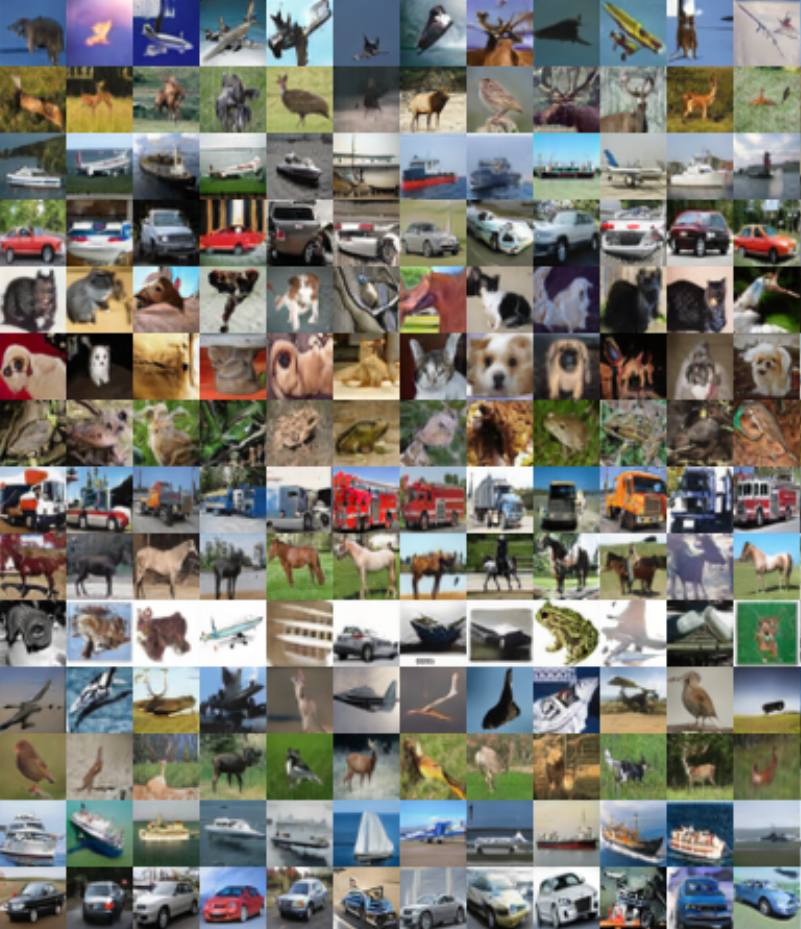}
    \caption{Synthetic images learned on the CIFAR10-B subset by StyleWSGAN, which a version of our WSGAN that built on StyleGAN2-ADA rather than a simple DCGAN as in our main experiments.}
    \label{fig:stylewsgan_cifar}
\end{figure}
\begin{table}[ht]
\centering
\caption{Additional datasets used to evaluate StyleWSGAN. Acc denotes accuracy, while Coverage denotes the number of samples where an LF does not abstain.}\label{tab:additional_datasets_style}
\resizebox{\textwidth}{!}{%
\begin{tabular}{@{} *9l @{}}
\toprule
    Dataset  & \#Classes  & \#LFs & \#Samples  & Mean LF Acc  & Min LF Acc & Max LF Acc& Mean Coverage & LF Type \\ 
    \midrule
    CIFAR10 - low noise LFs & 10 & 20& 48,000& 0.888& 0.816& 0.949& 0.102& Synthetic\\
    LSUN scene categories & 10 & 30& 1,212,270& 0.736& 0.624& 0.873& 0.098 & SSL-based\\
\bottomrule
\end{tabular}
}
\end{table}

\section{Additional Baselines}
\label{sec:addbase}
We compare against two additional baselines. First, we train a generative model that is conditioned on pseudolabel information with the aim of improving image generation performance; we use pseudolabels provided by established weak-supervision label models in this role. Second, we use a basic generative model to produce synthetic samples that augment a downstream classifier (with weak labels provided by outputs of weak supervision sources applied to the synthetic images). These two baselines represent the straightforward way to use weak supervision to improve generative modeling (and vice-versa). We observe that such naive combinations struggle compared to our proposed approach, further motivating the importance of the interface in WSGAN.

\subsection{Conditional Image Generation with Pseudolabels or Raw Weak Supervision Votes}
As an additional GAN baseline to our proposed WSGAN, we slightly adapt an Auxiliary Classifier Generative Adversarial Network (ACGAN) \cite{odena2017conditional} to condition it on pseudolabels. The ACGAN is run on all data, but the auxiliary loss on real data with pseudolabels is only used for samples where at least one labeling function does not abstain. We create two versions: (1) using probabilistic pseudolabels with a soft cross-entropy loss, and (2) using \textit{hard/crisp} labels with a cross-entropy loss.
To provide the strongest possible baseline in this experiment, we obtain the pseudolabels via the Dawid-Skene label model as it attains the best performance on average over all datasets compared to other related label models. Results are provided in Table \ref{tab:acgancomp}, showing that this baseline approach is frequently unable to overcome the noise in the pseudolabels to improve over the InfoGAN results, and that it does not perform better than WSGAN with an encoder. Furthermore, it was much more difficult to train these models and they frequently failed to converge.

We also attempted to train different types of conditional GANs (ACGAN and a GAN with projection discrimination) conditioned on the raw weak supervision votes, but were unable to obtain reasonable performance as the models failed to converge. 

\begin{table}
\centering
\caption{A comparison to using an ACGAN with pseudolabels. Image generation quality is measured by average Fr\'{e}chet Inception Distance (FID). The best scores for each dataset are highlighted in 
\first{blue}.
}\label{tab:acgancomp}
\begin{tabular}{@{} *6l @{}}
\toprule
    Dataset  & InfoGAN & ACGAN & ACGAN (crisp) & WSGAN-V & WSGAN-E \\ 
    \midrule
    AwA2 - A & $41.62$ & 51.32 & 47.21 & $36.74$  &  \first{34.71}  \\
    AwA2 - B & $41.62$ &  53.73 & 50.03 & $ 36.79$  &  \first{34.52} \\
    DomainNet & $51.88$ & 61.96 & 47.32 & $51.16$  &  \first{45.6} \\
    CIFAR10-A  & $28.93$  & 79.15 & 25.53 & $25.7$  & \first{22.71}    \\
    CIFAR10-C   & $33.64$ & 36.53 & 26.61 &  \first{24.11}  &  $26.0$   \\
    CIFAR10-D   & $33.64$ & 36.81 & 45.1 & $24.09$  &  \first{23.78} \\
    CIFAR10-E & $28.93$  & 80.43 & 33.05 & \first{21.97}  & $22.63$    \\
\bottomrule
\end{tabular}
\vspace{-3mm}
\end{table}

\subsection{Data Augmentation for Downstream Classification with Synthetic Images}
In this experiment, we augment the training set for a downstream classifier with synthetic images. As baselines, we generate synthetic images $\tilde{x}$ with an InfoGAN, and then apply the image labeling functions $\lambda$ to the generated images to obtain LF votes $\lambda(\tilde{x})$.
Pseudolabels for the synthetic images are then obtained by fitting label models to the real training data and then applying the label models to the labeling function outputs on the synthetic data. 
Table~\ref{tab:infonaive} compares InfoGAN + Snorkel and InfoGAN + DawidSkene baselines to the improvements in test accuracy obtained by using WSGAN and shows that we were unable to obtain improvements in downstream accuracy with the baselines, possibly due to performance of the InfoGAN on these small datasets.

\begin{table}
\centering
\caption{Baseline comparisons using an InfoGAN to create synthetic images, applying LFs to the synthetic images, and then using established label models to synthesize the weak labels in to a pseudolabel resulting in weakly labeled fake images. The table shows the change in test accuracy by augmenting the downstream classifier training data with such 1,000 synthetic images and corresponding pseudo labels. Experiments are conducted on a subset of the datasets where labeling functions can be applied to synthetic images. 
}\label{tab:infonaive}
\begin{tabular}{@{} *4l @{}}
\toprule
    Dataset  & WSGAN & InfoGAN + Snorkel & InfoGAN + DawidSkene \\
    \midrule
    AwA2 - A & 0.79\% & -0.63\% & -1.26\% \\
    AwA2 - B & 3.90\% & -1.01\% & -1.77\%\\
    DomainNet & 1.50\% & 0.02\% & -3.14\%\\
\bottomrule
\end{tabular}
\vspace{-3mm}
\end{table}

\newpage
\section{Additional Metrics}\label{app:additional_metrics}
We provide additional metrics for the label model comparisons shown in Table~\ref{tab:labelmodelresults} in the main paper. Again, results are averaged over 4 random runs.
Table \ref{tab:labelmodelresults_fone} shows the weighted F1 score, an average over all classes weighted by the support of each class. Table~\ref{tab:labelmodelresults_ap} shows weighted mean average precision, a metric that summarizes the precision-recall curve across all classes. We compute the average precision  individually for each class (one vs. rest) and then aggregate the scores by summing them weighted by the support of each class to produce the weighted mean average precision score.

\begin{table}
\centering
\caption{In this table, we include standard deviations for the  posterior accuracy of various label models on training samples with at least one LF vote, computed over five random runs. Due to a limited computational budget, we were unable to accumulate five runs for all datasets and model combinations.}\label{tab:std_deviations}
\resizebox{\textwidth}{!}{%
\begin{tabular}{@{} *6l @{}}
\toprule
    Dataset & DawidSkene &  MeTaL & FS  & Snorkel & WSGAN-Encoder \\ 
    \midrule
    AwA2 - A  & $0.607$($\pm$0.029) & $0.632$($\pm$0.002) &  0.615($\pm$0.003) & 0.641($\pm$0.001)  &   \first{0.681}($\pm0.011$)\\
    DomainNet  &  \second{0.658}($\pm$0.000) & $0.487$($\pm$0.004) & $0.635$($\pm$0.000) & 0.499($\pm$0.015)  &   0.643($\pm0.003$)\\
    MNIST  & 0.729($\pm$0.000) & 0.766($\pm$0.001) & 0.773($\pm$0.000) & 0.766($\pm$0.001)  & \first{0.813}($\pm0.004$)\\
    FashionMNIST & 0.717($\pm$0.002) & 0.730($\pm$0.001) & 0.734($\pm$0.001) & 0.729($\pm$0.001)&\first{0.744}($\pm0.002$)\\
    GTSRB  & 0.619($\pm$0.001) & \second{0.815}($\pm$0.002) & 0.679($\pm$0.001) & \second{0.814}($\pm$0.000)& \first{0.823}($\pm0.001$)\\
    CIFAR10-A  &   \second{0.850}($\pm$0.001) & $0.806$($\pm$0.001)   & $0.800$($\pm$0.000)  & $0.807$($\pm$0.002) & \first{0.874}($\pm0.002$)\\
    CIFAR10-B & 0.677($\pm$0.000) & 0.708($\pm$0.001) & 0.708($\pm$0.000) & 0.707($\pm$0.000) & \first{0.731}($\pm0.004$)\\
\bottomrule
\end{tabular}
}
\end{table}

\begin{table*}
\centering
\caption{Weighted mean average precision of various label models on training samples with at least one LF vote. We highlight the best result in \first{blue} and the second best result in \second{bold}.}\label{tab:labelmodelresults_ap}
\resizebox{\textwidth}{!}{%
\begin{tabular}{@{} *8l @{}}
\toprule
    Dataset  & MV & DawidSkene &  MeTaL & FS  & Snorkel & WSGAN-Vector & WSGAN-Encoder \\ 
    \midrule
    AwA2 - A & 0.616 & 0.661 & 0.653 & 0.627 & 0.653 & \second{0.672}  &  \first{0.737} \\
    AwA2 - B & 0.591 & 0.652 & 0.662 & 0.642 & 0.668 &  \second{0.681}  & \first{0.743} \\
    DomainNet & 0.599 & \second{0.702} & 0.630 & 0.654 & 0.621 &  0.679 & \first{0.795} \\
    MNIST & 0.684 & 0.772 & 0.784 & 0.765 & 0.785 & \second{0.792} & \first{0.870} \\
    FashionMNIST & 0.620 & 0.712 & 0.691 & 0.686 & 0.692 & \second{0.703} & \first{0.742}\\
    GTSRB & 0.718 & 0.731 & 0.761 & 0.714 & \second{0.772} & 0.768 & \first{0.808}\\
    CIFAR10-A  & 0.796 & 0.866 & 0.855 & 0.838 & 0.854  & \second{0.878}  & \first{0.912} \\
    CIFAR10-B & 0.594 & 0.659 & 0.658 & 0.631 & 0.666 &  \second{0.678} & \first{0.732} \\
    CIFAR10-C & 0.664 & 0.763 & 0.758 & 0.737 & 0.751   & \second{0.780}  & \first{0.825}\\
    CIFAR10-D  & 0.788 & 0.896 & 0.889 & 0.876 & 0.878  & \second{0.901}  & \first{0.908} \\
    CIFAR10-E   & 0.880 & \second{0.954} & 0.942 & 0.924 & 0.940  & 0.950  & \first{0.959} \\
    CIFAR10-F & 0.561 & 0.658 & 0.66 & 0.647 & 0.659 & \second{0.675} & \first{0.708}  \\
\bottomrule
\end{tabular}}
\end{table*}

\begin{table*}
\centering
\caption{Weighted F1 score of various label models on training samples with at least one LF vote. The F1 is computed separately for each class and then averaged weighted by the support of each class. 
We highlight the best result in \first{blue} and the second best result in \second{bold}.}\label{tab:labelmodelresults_fone}
\resizebox{\textwidth}{!}{%
\begin{tabular}{@{} *8l @{}}
\toprule
    Dataset  & MV & DawidSkene &  MeTaL & FS  & Snorkel & WSGAN-Vector & WSGAN-Encoder \\ 
    \midrule
    AwA2 - A & 0.641 & \second{0.665} & 0.62 & 0.619 & 0.636 & 0.637  & \first{0.684} \\
    AwA2 - B & 0.604 & \second{0.661} & 0.58 & 0.597 & 0.593   & \second{0.664}  & \first{0.672}\\
    DomainNet & 0.603 & \first{0.655} & 0.443 & 0.622 & 0.468  & \first{0.654}   &  \second{0.634}\\
    MNIST & 0.756 & 0.716 & 0.746 & 0.755 & 0.746 & \second{0.764} & \first{0.795}\\
    FashionMNIST & 0.706 & 0.691 & 0.698 & 0.705 & 0.698 & \second{0.710}  & \first{0.715} \\
    GTSRB & \second{0.802} & 0.616 & 0.800 & 0.628 & 0.799 & \second{0.801} & \first{0.811}\\
    CIFAR10-A  & 0.824 & \second{0.850} & 0.797 & 0.796 & 0.798 &  \second{0.851} & \first{0.872} \\
    CIFAR10-B & 0.712 & 0.672 & 0.702 & 0.703 & 0.702 & \second{0.720}  & \first{0.727} \\
    CIFAR10-C & 0.726 & \second{0.741} & 0.723 & 0.713 & 0.718 &   \second{0.738} & \first{0.759}\\
    CIFAR10-D  & 0.809 & \second{0.839} & 0.791 & 0.784 & 0.781  &  0.834 & \first{0.844} \\
    CIFAR10-E   & 0.864 & \second{0.901} & 0.843 & 0.825 & 0.839    & \second{0.899}  & \first{0.916} \\
    CIFAR10-F & 0.684 & 0.609 & 0.677 & 0.675 & 0.674 & \second{0.688} & \first{0.699} \\
\bottomrule
\end{tabular}
}
\end{table*}

\section{Theoretical Justification}\label{app:theory}
We provide additional setup and proofs for our two theoretical claims.

\subsection{Claim (1)}

Our goal is to derive a generalization bound; that is, an upper bound on $|\hat{\R}_{\hat{\D}} - \R_{\D}|$. In words, this is the gap between the loss on a sample drawn from the true distribution and the empirical loss we obtained by training on the weakly-supervised dataset with unlabeled data sampled from the generative model.

\paragraph{Mixture of Gaussians}
Recall that $D$ is the joint distribution of the unlabeled and labeled points. Let's call the unlabeled data  marginal distribution $D_{X}$. Then, we make the assumption that $D_{X}$ is a mixture of $k$ Gaussians. Here, there is some relationship between the mixtures and the two classes, but we need not further specify it. Using the result \cite{gm_sample_comp}, we get that the number of samples needed to learn $D_{X}$ up to $\varepsilon$ in total variation distance is $\tilde{\Theta}(kd^2/\varepsilon^2)$.

Note that in fact this expression hides some polylogarithmic terms. However, for simplicity, we're going to ignore these terms and just pretend that the necessary bound is $c_G kd^2 / \varepsilon^2$, where $c_G$ is some constant for learning a density. 

Based on this, we'll make the following assumption. We perform density estimation on $n_1$ samples from $D_{X}$ and obtain some model $g$ such that distribution of $g$ (we'll abuse notation and just refer to this as the model itself $g$) and $D_{X}$ satisfies
\begin{align}
\dtv(D_{X}, g) \leq d \sqrt{\frac{c_G k}{n_1}}.
\label{eq:xcontrol}
\end{align}

So now we have control over one marginal (the unlabeled data). Let's work on the conditional term next.

\paragraph{Majority Vote}
For simplicity, let's assume that we use majority vote as the aggregation scheme for the $m$ labeling functions. We make the following assumptions. The labeling functions have accuracy $1/2+\alpha$, for some $\alpha \in (0,1/2]$, in the following sense. For any datapoint $(X,Y)$, the probability of a labeling function guessing the value of $Y$ correctly is $1/2+\alpha$, and the probability of any guessing wrong is $1/2-\alpha$. This holds for all values of $X$. Note: these are very strong assumptions.

The probability that we make a mistake, e.g., that majority vote aggregates votes to 0 when $Y=1$ or vice-versa is given by the binomial CDF $F(m/2, m, \alpha+1/2)$, which has the following simple bound that follows from Hoeffding's inequality,
\[F(m/2, m, \alpha+1/2) \leq \exp \left(-2m\left(\alpha + 1/2 - \frac{m/2}{m}\right)^2 \right) = \exp(-2m\alpha^2).\]

With the above, as $D_{Y|X}$ is a Bernoulli random variable, we can directly upper bound the total variation distance between $D_{Y|X}$ and $D_{\hat{Y}|X}$:
\begin{align}
\dtv(D_{Y|X}, D_{\hat{Y}|X}) \leq \exp(-2m\alpha^2).
\label{eq:ycontrol}
\end{align}

\paragraph{Joint Distribution}
Now we have some control over the generative model's error (from the density estimation bound) and some control over the label recovery (from the above bound resulting from majority vote). Now we put it together. First, we write down some useful inequalities between the total variation distance and the Hellinger distance \cite{duchi_notes} (Prop 2.10). These are, for densities $p,q$,
\begin{align} \dhel(p, q) \leq \sqrt{2 \dtv(p, q) }
\label{eq:heltv}
\end{align}
and
\begin{align}
\dtv(p, q) \leq \dhel(p, q) \sqrt{1-\dhel(p, q)^2/4}.
\label{eq:tvhel}
\end{align}

We use $p$ as the density for $\D$ and $q$ as the density for $\hat{\D}$, and write $p = p_1(x) p_2(y|x)$, $q = q_1(x) q_2(y|x)$. First, using \eqref{eq:xcontrol} and \eqref{eq:heltv}, we have that
\begin{align}
\dhel(\D_X, g) \leq \sqrt{2 \dtv(\D_X, g) } \leq \left(\frac{4 c_G k d^2}{n_1} \right)^{\frac{1}{4}}.
\label{eq:xterms}
\end{align}
Then, using \eqref{eq:ycontrol} and \eqref{eq:heltv}, we get
\begin{align}
\dhel(D_{Y|X}, D_{\hat{Y}|X}) \leq \sqrt{2 \dtv(D_{Y|X}, D_{\hat{Y}|X}) } \leq \sqrt{2\exp(-2m\alpha^2)} = \sqrt{2}\exp(-m\alpha^2).
\label{eq:yterms}
\end{align}
Next, 
\begin{align*}
\dhel(\D, \hat{\D}) &= \int \int  (\sqrt{p_1(x) p_2(y|x)} - \sqrt{q_1(x) Q_2(y|x)})^2 dy dx \\
&= \int \int \left( p_1(x) p_2(y|x) +  q_1(x) q_2(y|x) - 2\sqrt{ p_1(x) p_2(y|x) q_1(x) q_2(y|x)} \right) dy dx \\
&= 2 - 2 \int\int \sqrt{ p_1(x) p_2(y|x) q_1(x) q_2(y|x)}  dy dx  \\
&= 2 - 2 \int \sqrt{p_1(x)q_1(x)}  \left(1-\frac{1}{2} \dhel(p_2, q_2) \right) dx \\
&\leq 2 -   2 \int \sqrt{p_1(x)q_1(x)} \left(1 - \frac{\sqrt{2}}{2}\exp(-m\alpha^2)  \right) .
\end{align*}
Note that here, we use the fact that our bound holds for all conditional distributions regardless of $x$.
Continuing,
\begin{align*}
\dhel(\D, \hat{\D}) &\leq 2 -   2 \int \sqrt{p_1(x)q_1(x)} \left(1 - \frac{\sqrt{2}}{2}\exp(-m\alpha^2)  \right) \\
&= 2 - 2(1-\frac{1}{2} \dhel(p_1, q_1)  \left(1 - \frac{\sqrt{2}}{2}\exp(-m\alpha^2) \right) \\
&\leq   2 - 2\left(1-\frac{1}{2} \left(\frac{4 c_G k d^2}{n_1} \right)^{\frac{1}{4}}\right )  \left(1 - \frac{\sqrt{2}}{2}\exp(-m\alpha^2) \right) \\
&= \left(\frac{4c_G k d^2}{n_1} \right)^{\frac{1}{4}} + \sqrt{2}\exp(-m\alpha^2) - \left(\frac{c_G k d^2}{n_1} \right)^{\frac{1}{4}}  \exp(-m \alpha^2).
\end{align*}

Now we apply \eqref{eq:tvhel} to get the bound back into the total variation distance setting. We have
\begin{align}
\dtv(\D, \hat{\D}) &\leq \dhel(\D, \hat{\D}) \sqrt{1-\dhel(\D, \hat{\D})^2/4}  \leq \dhel(\D, \hat{\D}) \nonumber \\
& \leq \left(\frac{4c_G k d^2}{n_1} \right)^{\frac{1}{4}} + \sqrt{2}\exp(-m\alpha^2) - \left(\frac{c_G k d^2}{n_1} \right)^{\frac{1}{4}}  \exp(-m \alpha^2)  \nonumber \\
& \leq \left(\frac{4c_G k d^2}{n_1} \right)^{\frac{1}{4}} + \sqrt{2}\exp(-m\alpha^2) .
\label{eq:tvbdfin}
\end{align}

\paragraph{Bounding the Risk}
The final task is to bound the risk. First, suppose we are training a classifier chosen from a function class $\mathcal{F}$, trained on $n_2$ independently-drawn data points. Then, a standard result is that with probability at least $1- \delta$, 
\begin{align}
\sup_{f \in \mathcal{F}} |\hat{\R}_{\D}(f) - \R_{\D}(f)| \leq 2 \mathfrak{R} + \sqrt{ \frac{\log(1/\delta)}{2n_2}}.
\label{eq:risk}
\end{align}
Here, $\mathfrak{R}$ is the Rademacher complexity of the function class. However, the above is for training on samples from the true distribution. Instead, we can write 
\begin{align*}
|\hat{\R}_{\hat{\D}} - \R_{\D}| &= |\hat{\R}_{\hat{\D}} -  {\R}_{\hat{\D}} +  {\R}_{\hat{\D}}- \R_{\D}| \\
&\leq  |\hat{\R}_{\hat{\D}} - {\R}_{\hat{\D}}| + | {\R}_{\hat{\D}} - \R_{\D}|.
\end{align*}
For the right-hand term, we have the following:
\begin{align*}
 | {\R}_{\hat{\D}} - \R_{\D}| &= |\int \ell(f(x), y) |p(x,y) - q(x,y)| d\mu \\
 &\leq B_{\ell} \dtv(\hat{\D}, \D).
\end{align*}
Then, putting this together with the expression in \eqref{eq:tvbdfin} into \eqref{eq:risk}, we get that, with probability at least $1- \delta$, 
\begin{align}
\sup_{f \in \mathcal{F}} |\hat{\R}_{\hat{\D}}(f) - \R_{\D}(f)| &\leq (\sup_{f \in \mathcal{F}} |\hat{\R}_{\hat{\D}} - {\R}_{\hat{\D}}| + | {\R}_{\hat{\D}} - \R_{\D}|) \nonumber \\
&\leq 2 \mathfrak{R} + \sqrt{ \frac{\log(1/\delta)}{2n_2}} + B_{\ell} \dtv(\hat{\D}, \D)\nonumber \\
&\leq 2 \mathfrak{R} + \sqrt{ \frac{\log(1/\delta)}{2n_2}} +  B_{\ell} \left(\frac{4c_G k d^2}{n_1} \right)^{\frac{1}{4}} +  B_{\ell} \sqrt{2}\exp(-m\alpha^2).
\label{eq:finalriskbd}
\end{align}

\paragraph{Interpreting the Bound}
In \eqref{eq:finalriskbd}, we saw that 
\[
\sup_{f \in \mathcal{F}} |\hat{\R}_{\hat{\D}}(f) - \R_{\D}(f)| \leq 2 \mathfrak{R} + \sqrt{ \frac{\log(1/\delta)}{2n_2}} +  B_{\ell} \left(\frac{4c_G k d^2}{n_1} \right)^{\frac{1}{4}} +  B_{\ell} \sqrt{2}\exp(-m\alpha^2).
\]
Now let's interpret this result piece-by-piece. The terms are the following
\begin{itemize}
\item The Rademacher complexity of the function class, which is present in the standard generalization bound.
\item An estimation error term as a function of how much data we have to train our classifier $n_2$. It has the standard rate $1/\sqrt{n_2}$. Again, this is standard in any bound.
\item A penalty term due to the generative model usage. It tells us how much we lose by training on generated data rather than (unlabeled) data from the true distribution. It scales as $n_1^{-1/4}$, where $n_1$ is the number of samples of unlabeled data used to train the generative model. Note also the dependence on the number of mixture components and dimension.
\item A penalty term due to weak supervision. It tells us what we lose by using estimated (pseudo)labels rather than true labels; we note that the penalty scales exponentially in the number of labeling functions $m$, but is slowed down by small $\alpha$, as our accuracies are $\alpha$ better than random.
\end{itemize}


\subsection{Claim (2)}
Our proof of claim (2) uses the setting of \citet{rcgan2018}, which introduces RCGAN. RCGAN is a conditional GAN architecture that corrupts the label before passing them to the discriminator by passing the true labels through a noisy channel. 
The authors provide a multiplicative approximation bound between the GAN loss under the unobserved true labels and the loss under the noisy labels. 
This noisy channel model acts as a nice model of the label generating process of weak supervision. Using this noisy channel model, we can control the amount of label corruption to match that of weak supervision. 

Following the setup of \citet{rcgan2018}, we define a function that multiplies a one-hot encoded true label vector by a right-stochastic matrix $C \in \mathbb{R}^{2 \times 2}$ where $C_{i, j} = P(\tilde{y}_j | y_i)$---this is our noisy channel. This induces a joint distribution $\widetilde{P}_{X, \tilde{Y}}$ for the examples $x$ and noisy labels $\tilde{y}$ from the conditional distribution defined by $C$. We restate the theorems of interest from \citet{rcgan2018} here, and proceed to adapt them to our problem setting. 

\begin{theorem}{(Multiplicative bound on the total variation distance from \citet{rcgan2018}.)}
\label{claim2_theorem_tv}
Let $P_{X, Y}$ and $Q_{X, Y}$ be two distributions over $\mathcal{X} \times \{0, 1\}$ and let $\widetilde{P}_{X, \tilde{Y}}$ and $\widetilde{Q}_{X, \tilde{Y}}$ be the corresponding distributions with noisy labels from $C$. If $C$ is full-rank, then 
\begin{equation}
    \label{eq:ganbound}
    d_{\text{TV}}(\widetilde{P}, \widetilde{Q}) \leq d_{\text{TV}}(P, Q) \leq \|C^{-1}\|_{\infty}  \,\, d_{\text{TV}}(\widetilde{P}, \widetilde{Q}).
\end{equation}
\end{theorem}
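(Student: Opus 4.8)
The plan is to prove the two inequalities separately, reducing both total variation distances to $L^1$ norms of a common signed density integrated over $x$, and then exploiting that the noisy channel $C$ acts linearly on the label coordinate alone, independently of $x$. Fix a dominating measure $\mu$ on $\mathcal{X}$ for the $X$-marginals of $P$ and $Q$ (which need not coincide), and for each $x$ write $p(x)=(P(0\mid x),P(1\mid x))$ and $q(x)=(Q(0\mid x),Q(1\mid x))$, so that $P(x,y)=P_X(x)\,p(x)_y$ and likewise for $Q$. Set $v(x)=P_X(x)\,p(x)-Q_X(x)\,q(x)\in\Rbb^2$, so that $\dtv(P,Q)=\tfrac12\int\|v(x)\|_1\,d\mu(x)$. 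Since $C$ does not depend on $x$, the $x$-sliced signed density of the noisy pair is the row-vector--matrix product $\widetilde v(x):=v(x)\,C$, hence $\dtv(\widetilde P,\widetilde Q)=\tfrac12\int\|\widetilde v(x)\|_1\,d\mu(x)$.

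For the \textbf{left inequality} I would use the elementary estimate $\|w\,A\|_1\le\|w\|_1\,\|A\|_\infty$ for any row vector $w$ and matrix $A$ (expand $(wA)_j=\sum_i w_iA_{ij}$, apply the triangle inequality, and recognize $\|A\|_\infty$ as the maximum absolute row sum). Taking $A=C$ and using $\|C\|_\infty=1$, which holds because $C$ is right-stochastic, then integrating over $x$, gives $\dtv(\widetilde P,\widetilde Q)\le\dtv(P,Q)$. Equivalently, this is the data-processing inequality for $\dtv$ under the Markov kernel $(x,y)\mapsto(x,\tilde y)$.

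For the \textbf{right inequality}, full-rankness of $C$ provides $C^{-1}$, and since $w\mapsto wC$ is for each $x$ a linear bijection on the two-dimensional label coordinate, we recover $v(x)=\widetilde v(x)\,C^{-1}$ pointwise in $x$. Applying the same estimate with $A=C^{-1}$ yields $\|\widetilde v(x)\,C^{-1}\|_1\le\|\widetilde v(x)\|_1\,\|C^{-1}\|_\infty$; integrating over $x$ gives $\dtv(P,Q)\le\|C^{-1}\|_\infty\,\dtv(\widetilde P,\widetilde Q)$, where $\|C^{-1}\|_\infty$ denotes the maximum absolute row sum of $C^{-1}$.

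The main obstacle is not any hard inequality but the bookkeeping that makes both lines possible: one must justify that corrupting only $Y$ replaces the $x$-sliced signed density $v(x)$ by $v(x)\,C$ with the \emph{same} outer integral over $x$, accommodate $P$ and $Q$ having different $X$-marginals, and keep the operator-norm convention straight (the $\|\cdot\|_\infty$ appearing in the statement is the maximum absolute row sum, which is exactly what governs $\|wA\|_1\le\|w\|_1\|A\|_\infty$ for row vectors $w$). Once that identification is in place the two bounds are one line each. I would close by noting that the symmetric weak-supervision channel $C=\bigl(\begin{smallmatrix}1-\epsilon_\lambda & \epsilon_\lambda\\ \epsilon_\lambda & 1-\epsilon_\lambda\end{smallmatrix}\bigr)$ satisfies $\|C^{-1}\|_\infty=(1-2\epsilon_\lambda)^{-1}$, recovering the constant in Claim~(2).
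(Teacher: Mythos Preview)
Your argument is correct, but there is nothing to compare it against: the paper does not prove this theorem. It is restated verbatim from \citet{rcgan2018} and used as a black box in the proof of Proposition~\ref{claim2}; the paper's own contribution for Claim~(2) begins only at the parameterization $C_\epsilon$ in \eqref{eq:noisychannel} and the Hoeffding bound on $\epsilon_{\text{MV}}$.

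That said, your proposal is exactly the standard proof one would give (and is essentially the argument in the original RCGAN paper): reduce both total variation distances to an $x$-integral of $\ell^1$ norms of the two-dimensional label-sliced signed density, observe that the noisy channel acts as right-multiplication by $C$ on each slice, and then apply the row-vector bound $\|wA\|_1\le\|w\|_1\|A\|_\infty$ once with $A=C$ (using $\|C\|_\infty=1$ by stochasticity) and once with $A=C^{-1}$. Your bookkeeping remarks about differing $X$-marginals and the operator-norm convention are well placed, and your closing computation of $\|C_\epsilon^{-1}\|_\infty=(1-2\epsilon)^{-1}$ is precisely what the paper derives in \eqref{eq:noisychannel_fin}.
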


Theorem~\ref{claim2_theorem_tv} says that the total variation distance between the true noisy distribution and the noisy generated distribution from RCGAN approximate its noiseless counterpart up to a factor of $\|C^{-1}\|_{\infty}$. Our goal is to construct $C_{\epsilon_{\text{MV}}}$ to model the noise from weak supervision (in particular, majority vote) and show that it leads to a tighter bound than when we directly plug in the labels from a single LF into Theorem~\ref{claim2_theorem_tv}. To begin, consider the following parameterization of $C$, with $\epsilon \in (0, 1/2)$:
\begin{equation}
    \label{eq:noisychannel}
    C_{\epsilon} = I_2 +   \begin{bmatrix}
                -\epsilon & \epsilon \\
                \epsilon & -\epsilon \\
                \end{bmatrix}
        =       \begin{bmatrix}
                1 -\epsilon & \epsilon \\
                \epsilon & 1 -\epsilon \\
                \end{bmatrix}.
\end{equation}
Here, $\epsilon$ denotes the labeling error for each class. Given this parameterization, we obtain the following expression for $\|C_{\epsilon}^{-1}\|_{\infty}$. 
\begin{align}
    \|C_{\epsilon}^{-1}\|_{\infty} &= \left\| \begin{bmatrix}
                1 -\epsilon & \epsilon \\
                \epsilon & 1 -\epsilon \\
                \end{bmatrix}^{-1} \right\|_{\infty} \\
    &= |((1 - \epsilon)^2 - \epsilon^2)^{-1}| \left\| \begin{bmatrix}
                1 -\epsilon & -\epsilon \\
                -\epsilon & 1 -\epsilon \\
                \end{bmatrix} \right\|_{\infty} \\
    &= (1 - 2\epsilon)^{-1} \label{eq:noisychannel_fin}
\end{align}

Note that $C_{\epsilon}$ is full-rank as it has a finite inverse. It is also clear that $\|C_{\epsilon}^{-1}\|_{\infty}$ is a monotonically increasing function of $\epsilon$. That is to say that if we do something to decrease the labeling error $\epsilon$, then $\|C_{\epsilon}^{-1}\|_{\infty}$ also decreases and we obtain a tighter bound. We will go on to derive an expression for the labeling error under majority vote with $m$ LFs, $\epsilon_{\text{MV}}$, and show that it is smaller than the labeling error from a single LF, $\epsilon_{\lambda}$. Namely, we want find a condition where $\epsilon_{\text{MV}} \leq \epsilon_{\lambda}$ holds and that majority vote leads to an improved Theorem~\ref{claim2_theorem_tv} bound. 

\begin{proposition}{(Total variation version.)}
\label{claim2}
Let $\epsilon_{\text{MV}}$ be the labeling error from majority vote from $m$ LFs, where $m \geq \frac{\log(1 / \epsilon_\lambda)}{2 \left( \frac{1}{2} - \epsilon_\lambda \right)^2}$, whose individual labeling errors are each $\epsilon_{\lambda}$. Then the following holds
\begin{equation*}
    d_{\text{TV}}(\widetilde{P}_{\text{MV}}, \widetilde{Q}_{\text{MV}}) \leq d_{\text{TV}}(P, Q) \leq \|C_{\epsilon_\text{MV}}^{-1}\|_{\infty}  \,\, d_{\text{TV}}(\widetilde{P}_{\text{MV}}, \widetilde{Q}_{\text{MV}}) \leq \|C_{\epsilon_\lambda}^{-1}\|_{\infty}  \,\, d_{\text{TV}}(\widetilde{P}_{\text{MV}}, \widetilde{Q}_{\text{MV}}).
\end{equation*} 
\end{proposition}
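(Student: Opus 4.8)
The plan is to instantiate Theorem~\ref{claim2_theorem_tv} with a symmetric noisy channel $C_{\epsilon_{\text{MV}}}$ of the form \eqref{eq:noisychannel} whose scalar parameter $\epsilon_{\text{MV}}$ is the error rate of the majority-vote label, and then to exploit the fact, recorded in \eqref{eq:noisychannel_fin}, that $\epsilon \mapsto \|C_\epsilon^{-1}\|_\infty = (1-2\epsilon)^{-1}$ is strictly increasing on $(0,1/2)$. First I would check that, under the standing assumptions on the LFs (each errs with the same rate $\epsilon_\lambda$ on both classes, and the LFs are conditionally independent given $Y$), the majority-vote aggregate is itself produced from $Y$ by a binary symmetric channel: by the class symmetry, the probability of flipping a $0$ to a $1$ equals that of flipping a $1$ to a $0$, so this channel is of the parametric form $C_{\epsilon_{\text{MV}}}$ for a single $\epsilon_{\text{MV}} \in [0,1/2)$. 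For even $m$ I would either assume $m$ odd or fix an arbitrary symmetric tie-breaking rule so that symmetry is preserved.

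Next I would bound $\epsilon_{\text{MV}}$. The number of LFs voting correctly is a sum of $m$ independent $\mathrm{Bernoulli}(1-\epsilon_\lambda)$ variables with mean $m(1-\epsilon_\lambda)$, and majority vote errs only when fewer than $m/2$ of them are correct; Hoeffding's inequality therefore gives $\epsilon_{\text{MV}} \le \exp\!\big(-2m(\tfrac12-\epsilon_\lambda)^2\big)$, the same binomial-tail estimate used in Claim~(1). The hypothesis $m \ge \log(1/\epsilon_\lambda)\big/\big(2(\tfrac12-\epsilon_\lambda)^2\big)$ is precisely the condition under which this right-hand side is at most $\epsilon_\lambda$, so $0 \le \epsilon_{\text{MV}} \le \epsilon_\lambda < 1/2$. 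In particular $C_{\epsilon_{\text{MV}}}$ has finite inverse and hence is full-rank, which is exactly the hypothesis needed to invoke Theorem~\ref{claim2_theorem_tv}.

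Applying Theorem~\ref{claim2_theorem_tv} with $P = P_{X,Y}$, $Q = Q_{X,Y}$, and $C = C_{\epsilon_{\text{MV}}}$ yields the first two inequalities $d_{\text{TV}}(\widetilde{P}_{\text{MV}}, \widetilde{Q}_{\text{MV}}) \le d_{\text{TV}}(P,Q) \le \|C_{\epsilon_{\text{MV}}}^{-1}\|_\infty \, d_{\text{TV}}(\widetilde{P}_{\text{MV}}, \widetilde{Q}_{\text{MV}})$. Then, since \eqref{eq:noisychannel_fin} shows $\|C_\epsilon^{-1}\|_\infty = (1-2\epsilon)^{-1}$ is increasing on $(0,1/2)$, the bound $\epsilon_{\text{MV}} \le \epsilon_\lambda$ gives $\|C_{\epsilon_{\text{MV}}}^{-1}\|_\infty \le \|C_{\epsilon_\lambda}^{-1}\|_\infty$; appending this to the previous chain produces the final inequality and completes the proof. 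A small remark worth including is that one only ever uses an upper bound on $\epsilon_{\text{MV}}$ together with monotonicity, so the argument does not require the Hoeffding estimate to be tight.

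I expect the main obstacle to be the first step: carefully justifying that majority vote over the $m$ noisy LFs genuinely collapses to a single binary symmetric channel of the exact form \eqref{eq:noisychannel}, i.e. spelling out which assumptions on the LFs are needed (conditional independence given $Y$, class-symmetric error $\epsilon_\lambda$) and handling the even-$m$ tie case, so that Theorem~\ref{claim2_theorem_tv} applies verbatim. The remaining pieces — the Hoeffding tail bound, the algebra converting the sample-size condition into $\epsilon_{\text{MV}} \le \epsilon_\lambda$, and the monotonicity argument — are routine.
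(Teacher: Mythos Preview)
Your proposal is correct and follows essentially the same approach as the paper's proof: bound $\epsilon_{\text{MV}}$ by Hoeffding, use the hypothesis on $m$ to get $\epsilon_{\text{MV}}\le\epsilon_\lambda$, invoke Theorem~\ref{claim2_theorem_tv} with $C_{\epsilon_{\text{MV}}}$, and then append the last inequality via the monotonicity of $\epsilon\mapsto(1-2\epsilon)^{-1}$ from \eqref{eq:noisychannel_fin}. If anything, you are more careful than the paper in explicitly justifying that majority vote over conditionally independent, class-symmetric LFs collapses to a binary symmetric channel of the form \eqref{eq:noisychannel} and in flagging the even-$m$ tie-breaking issue; the paper simply assumes this structure without comment.
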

\begin{proof}
We begin by deriving an upper bound on $\epsilon_{\text{MV}}$. 
We have LFs $\{\lambda_i\}_{i=1}^m$ that each produce incorrect predictions with probability $\epsilon_\lambda = \frac{1}{2} - \alpha$, using $\alpha$ as defined in Claim (1). Now, we need to show that the probability of producing incorrect predictions using majority vote with more label functions, $\{\lambda_i\}_{i=1}^m$, has error $\epsilon_{\text{MV}} \leq \epsilon_\lambda$. 
Define the event that $\lambda_i$ is incorrect as follows: $z_i = \mathbb{I}[\lambda_i \neq y]$, then $\E[z_i] = \epsilon_\lambda$. Using this, we apply Hoeffding's bound to $\epsilon_{\text{MV}}$. 
\begin{align}
    \epsilon_{\text{MV}} &= P \left(\sum_{i=1}^m z_i - m \epsilon_\lambda \geq \frac{m}{2} - m \epsilon_\lambda \right) \\
    &\leq \exp\left(\frac{-2 \left( \frac{m}{2} - m \epsilon_\lambda \right)^2}{m} \right) \\
    &= \exp\left(-2m \left( \frac{1}{2} - \epsilon_\lambda \right)^2 \right). \label{eq:hoeffding_mv} 
\end{align}

Next, we plug the bound from (\ref{eq:hoeffding_mv}) into (\ref{eq:noisychannel_fin}) to obtain the following expression for $\|C_{\epsilon_{\text{MV}}}^{-1}\|_\infty$. 
\begin{align}
    \|C_{\epsilon_{\text{MV}}}^{-1}\|_\infty &= (1 - 2 \epsilon_{\text{MV}})^{-1} \\
    &\leq \left(1 - 2 \exp\left(-2m \left( \frac{1}{2} - \epsilon_\lambda \right)^2 \right) \right)^{-1}. \label{eq:CeMV}
\end{align}

To complete the proof, we need the following to hold: $\|C_{\epsilon_{\text{MV}}}^{-1}\|_\infty \leq \|C_{\epsilon_{\lambda}}^{-1}\|_\infty$, but due to the monotonicity of $\|C_{\epsilon}^{-1}\|_\infty$, it is sufficient to show that $\epsilon_{\text{MV}} \leq \epsilon_{\lambda}$. 

Recall that $\epsilon_{\text{MV}} \leq \exp\left(-2m \left( \frac{1}{2} - \epsilon_\lambda \right)^2 \right)$, so if we set 
$\exp\left(-2m \left( \frac{1}{2} - \epsilon_\lambda \right)^2 \right) \leq \epsilon_\lambda$, 
we obtain the minimum number of label functions, $m$, required to ensure $\epsilon_{\text{MV}} \leq \epsilon_\lambda$. 

\begin{align*}
    \exp\left(-2m \left( \frac{1}{2} - \epsilon_\lambda \right)^2 \right) &\leq \epsilon_\lambda \\
    \Rightarrow m &\geq \frac{\log(1 / \epsilon_\lambda)}{2 \left( \frac{1}{2} - \epsilon_\lambda \right)^2}.
\end{align*}

Plugging (\ref{eq:CeMV}) into Theorem~\ref{claim2_theorem_tv}, we obtain the following
\begin{align*}
        d_{\text{TV}}(\widetilde{P}_{\text{MV}}, \widetilde{Q}_{\text{MV}}) \leq d_{\text{TV}}(P, Q) &\leq \|C_{\epsilon_\text{MV}}^{-1}\|_{\infty}  \,\, d_{\text{TV}}(\widetilde{P}_{\text{MV}}, \widetilde{Q}_{\text{MV}}) \\
        \\
        &\leq \left(1 - 2 \exp\left(-2m \left( \frac{1}{2} - \epsilon_\lambda \right)^2 \right) \right)^{-1}
        d_{\text{TV}}(\widetilde{P}_{\text{MV}}, \widetilde{Q}_{\text{MV}}) \\
        \\
        &\leq \left(1 - 2 \epsilon_\lambda \right)^{-1}
        d_{\text{TV}}(\widetilde{P}_{\text{MV}}, \widetilde{Q}_{\text{MV}}) \\
        \\
        &= \|C_{\epsilon_\lambda}^{-1}\|_{\infty}  \,\, d_{\text{TV}}(\widetilde{P}_{\text{MV}}, \widetilde{Q}_{\text{MV}})
\end{align*}
which completes the proof. 
\end{proof}

Notice that the proof of Proposition~\ref{claim2} does not depend on total variation distance beyond the dependence on Theorem~\ref{claim2_theorem_tv}. As such, Proposition~\ref{claim2} can be stated more generally in terms of the Integral Probability Metric induced by the GAN discriminator $\mathcal{F}$ using Theorem 2 of \citet{rcgan2018}:
\begin{equation*}
    d_{\mathcal{F}}(\widetilde{P}_{\text{MV}}, \widetilde{Q}_{\text{MV}}) \leq d_{\mathcal{F}}(P, Q) \leq \|C_{\epsilon_\text{MV}}^{-1}\|_{\infty}  \,\, d_{\mathcal{F}}(\widetilde{P}_{\text{MV}}, \widetilde{Q}_{\text{MV}}) \leq \|C_{\epsilon_\lambda}^{-1}\|_{\infty}  \,\, d_{\mathcal{F}}(\widetilde{P}_{\text{MV}}, \widetilde{Q}_{\text{MV}}).
\end{equation*} 

Finally, notice that Proposition~\ref{claim2} is made in terms of $d_{\mathcal{F}}(\widetilde{P}_{\text{MV}}, \widetilde{Q}_{\text{MV}})$ and not in terms of $d_{\mathcal{F}}(\widetilde{P}_{\lambda}, \widetilde{Q}_{\lambda})$. We can show that as the number of LFs approach infinity, we recover the distance under the clean labels: $d_{\mathcal{F}}(P, Q)$. Applying Theorem~\ref{claim2_theorem_tv} to majority vote and a single LF results in the following two expressions:
\begin{equation}
    d_{\mathcal{F}}(\widetilde{P}_{\text{MV}}, \widetilde{Q}_{\text{MV}}) \leq d_{\mathcal{F}}(P, Q) \leq \|C_{\epsilon_\text{MV}}^{-1}\|_{\infty}  \,\, d_{\mathcal{F}}(\widetilde{P}_{\text{MV}}, \widetilde{Q}_{\text{MV}})
\end{equation} 

\begin{equation}
    d_{\mathcal{F}}(\widetilde{P}_{\lambda}, \widetilde{Q}_{\lambda}) \leq d_{\mathcal{F}}(P, Q) \leq \|C_{\epsilon_{\lambda}}^{-1}\|_{\infty}  \,\, d_{\mathcal{F}}(\widetilde{P}_{\lambda}, \widetilde{Q}_{\lambda})
\end{equation} 

Rearranging terms, we obtain the following 
$$ 1 \leq \frac{d_{\mathcal{F}}(P, Q)}{d_{\mathcal{F}}(\widetilde{P}_{\text{MV}}, \widetilde{Q}_{\text{MV}})} \leq \|C_{\epsilon_\text{MV}}^{-1}\|_{\infty} \leq \|C_{\epsilon_{\lambda}}^{-1}\|_{\infty}  $$
and 
$$ 1 \leq \frac{d_{\mathcal{F}}(P, Q)}{d_{\mathcal{F}}(\widetilde{P}_{\lambda}, \widetilde{Q}_{\lambda})} \leq \|C_{\epsilon_{\lambda}}^{-1}\|_{\infty}.  $$
Notice that $\|C_{\epsilon_{\lambda}}^{-1}\|_{\infty}$ has no dependence on $m$ since it's a single LF, but $\|C_{\epsilon_\text{MV}}^{-1}\|_{\infty}$ approaches $1$ as $m \to \infty$:
\begin{align*}
    &\lim_{m \to \infty} 1 \leq \frac{d_{\mathcal{F}}(P, Q)}{d_{\mathcal{F}}(\widetilde{P}_{\text{MV}}, \widetilde{Q}_{\text{MV}})} \leq \|C_{\epsilon_\text{MV}}^{-1}\|_{\infty} \leq \|C_{\epsilon_{\lambda}}^{-1}\|_{\infty} \\
    &\Rightarrow 1 \leq \frac{d_{\mathcal{F}}(P, Q)}{d_{\mathcal{F}}(\widetilde{P}_{\text{MV}}, \widetilde{Q}_{\text{MV}})} \leq 1 \leq \|C_{\epsilon_{\lambda}}^{-1}\|_{\infty} \\
    &\Rightarrow d_{\mathcal{F}}(P, Q) = d_{\mathcal{F}}(\widetilde{P}_{\text{MV}}, \widetilde{Q}_{\text{MV}}).
\end{align*} 
Hence we obtain a stronger bound as the number of LFs increases. 

\subsection{Extensions}
For the sake of clarity, we make several simplifying assumptions in Claims (1) and (2). Both claims use the simplest possible aggregation strategy for weak supervision---majority vote, and our analysis in Claim (1) involves the use of a Gaussian mixture model---a less complex object of study compared to a GAN. We can directly extend both analyses to use more sophisticated weak supervision label models instead of majority vote, and different generative models, which should lead to improved bounds at the expense of a more complex claim statement. We note, additionally, that neither of the claims attempt to provide deep insight into the benefits of \textit{jointly} learning the generative model and the label model---but this can be done with a slightly more careful analysis.  

\newpage
\section{Additional Images}
\begin{figure}[t]
    \centering
    \includegraphics[width=0.55\textwidth]{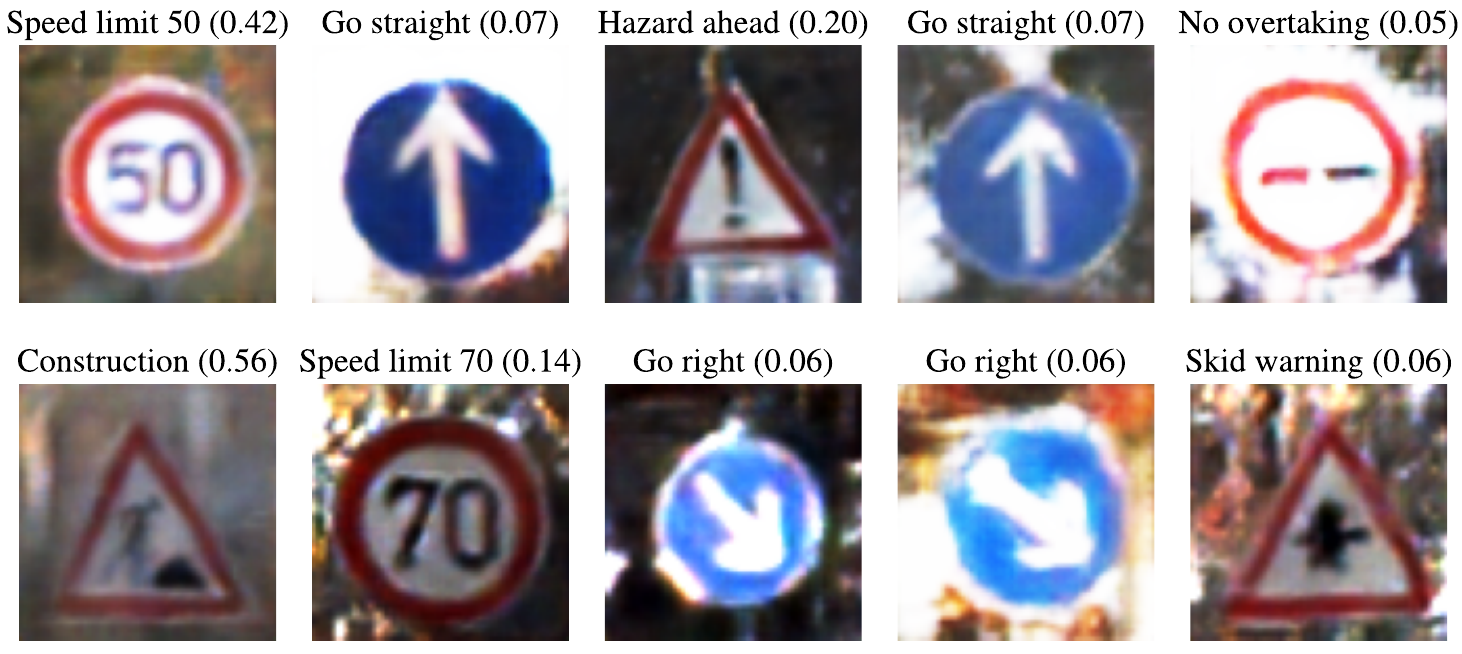}
    \vspace{-3mm}
    \caption{\small 
    Images and pseudolabels generated by the proposed WSGAN (with a simple DCGAN architecture) on the weakly supervised GTSRB dataset. 
    WSGAN can estimate labels even for images where no weak supervision sources provide information (see end of Section~\ref{sec:proposedmethod}).
    }
    \label{fig:realfakeGTSRB}
\end{figure}
Here we provide additional generated images in Figures \ref{fig:realfakeGTSRB},
\ref{fig:randgridmnist}, \ref{fig:randgridfashion}, \ref{fig:randgriddomainnet}, and \ref{fig:randgridawa2}. These random images are generated by WSGAN with a DCGAN base architecture, where the discrete latent variable $d$ passed to the generator is kept the same in each row. 

\begin{figure}[t]
    \centering
    \includegraphics[width=0.6\textwidth]{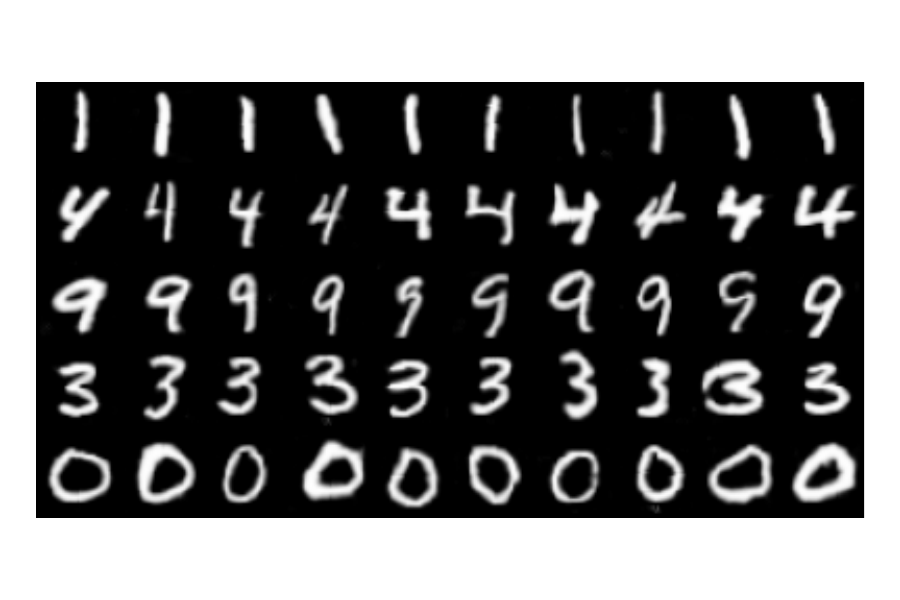}
    \caption{A random set of MNIST images generated by WSGAN-E (using a DCGAN), with the discrete latent  random variable  kept fix for each row of images. }
    \label{fig:randgridmnist}
\end{figure}

\begin{figure}[t]
    \centering
    \includegraphics[width=0.6\textwidth]{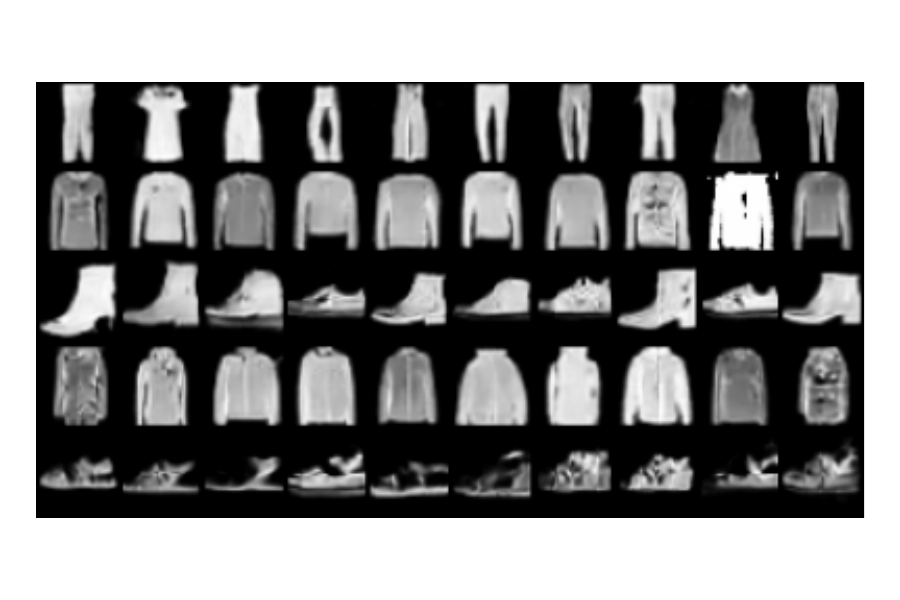}
    \caption{A random set of FashionMNIST images generated by WSGAN-E  (using a DCGAN), with the discrete latent  random variable  kept fix for each row of images. }
    \label{fig:randgridfashion}
\end{figure}

\begin{figure}[t]
    \centering
    \includegraphics[width=0.6\textwidth]{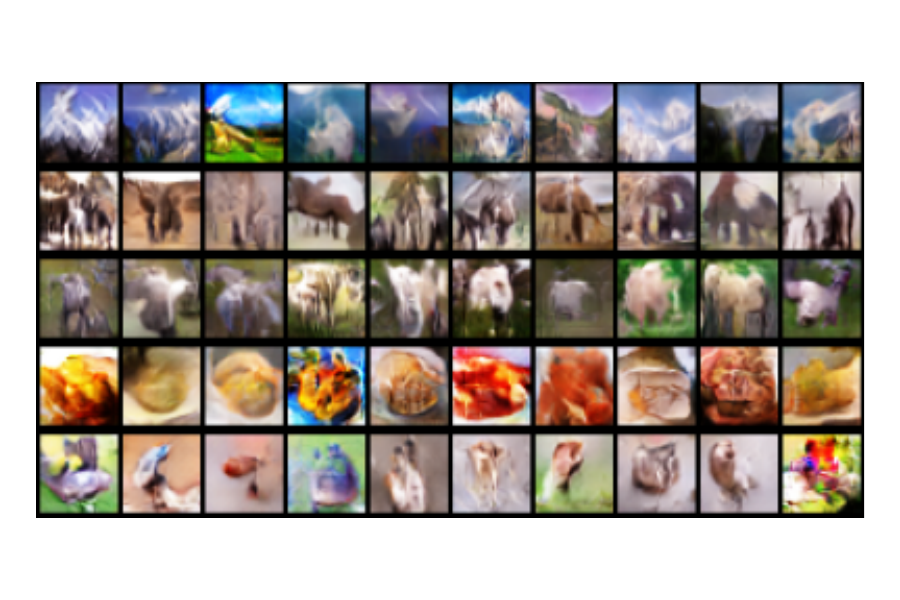}
    \caption{A random set of Domainnet images generated by WSGAN-E  (using a DCGAN), with the discrete latent  random variable  kept fix for each row of images. Note that this dataset is particularly challenging for a GAN as our dataset has fewer than 7,000 images, resulting in considerably lower quality of synthetic images compared to GTSRB for example.}
    \label{fig:randgriddomainnet}
\end{figure}

\begin{figure}[t]
    \centering
    \includegraphics[width=0.6\textwidth]{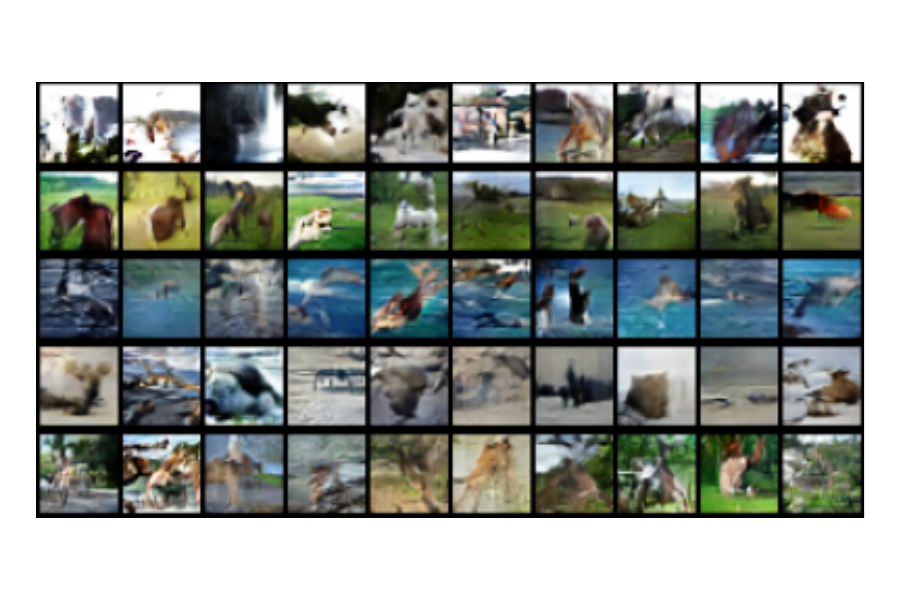}
    \caption{A random set of AwA2 images generated by WSGAN-E  (using a DCGAN), with the discrete latent  random variable  kept fix for each row of images. Note that this dataset is particularly challenging for a GAN, as this weakly supervised dataset has fewer than 7,000 images, resulting in considerably lower quality of synthetic images compared to GTSRB for example.}
    \label{fig:randgridawa2}
\end{figure}

\end{document}